%% file: main.tex
\documentclass[a4paper,fleqn]{cas-dc}

\usepackage[numbers]{natbib}

\usepackage{amsmath,amssymb,amsfonts}
\usepackage{amsthm}
\newtheorem{theorem}{Theorem}
\newtheorem{proposition}{Proposition}
\newtheorem{corollary}{Corollary}
\usepackage{algorithmic}
\usepackage{algorithm}
\usepackage{graphicx}
\usepackage{textcomp}
\usepackage{xcolor}
\usepackage{tikz}
\usetikzlibrary{shapes.geometric, arrows.meta, positioning, fit, backgrounds, calc}
\definecolor{clientblue}{HTML}{2166AC}
\definecolor{clientfill}{HTML}{EFF3FF}
\definecolor{servergray}{HTML}{636363}
\definecolor{serverfill}{HTML}{F7F7F7}
\definecolor{haogold}{HTML}{E6850E}
\definecolor{haofill}{HTML}{FFF3E0}
\definecolor{encgreen}{HTML}{1B7837}
\definecolor{dpviolet}{HTML}{7B3294}
\usepackage{booktabs}
\usepackage{multirow}
\usepackage{hyperref}
\usepackage{cleveref}

\begin{document}

\let\WriteBookmarks\relax
\def\floatpagepagefraction{1}
\def\textpagefraction{.001}

\shorttitle{Homomorphic Advantage Operator}
\shortauthors{A. M. Nadhir et al.}

\title[mode = title]{Homomorphic Advantage Operator: Stabilizing Reinforcement Learning Under Fully Homomorphic Encryption Constraints}

\author[1]{Abid Mohamed Nadhir}
\cormark[1]
\ead{abid-mohamednadhir@univ-eloued.dz}

\author[2,3]{Ahmad Al Hanbali}
\cormark[2]
\ead{ahmad.alhanbali@kfupm.edu.sa}

\author[1]{Beggas Mounir}
\cormark[1]
\ead{beggas-mounir@univ-eloued.dz}

\affiliation[1]{organization={El Oued University},
            city={El Oued},
            country={Algeria}}

\affiliation[2]{organization={King Fahd University of Petroleum and Minerals (KFUPM)},
            city={Dhahran},
            country={Saudi Arabia}}
\affiliation[3]{organization={Interdisciplinary Research Center For Smart Mobility and Logistics, KFUPM},
            city={Dhahran},
            country={Saudi Arabia}}

\cortext[1]{Corresponding author}

\begin{abstract}
Privacy-preserving machine learning presents significant deployment challenges on the cloud for intelligent systems with confidential data. Fully Homomorphic Encryption (FHE) offers a compelling solution for secure computation, preserving data confidentiality of cloud computations. However, applying FHE to reinforcement learning (RL) requires replacing non-linear operations with polynomial approximations, which diverge catastrophically due to a unique recursive error phenomenon known as the Bellman drift. This article introduces the Homomorphic Advantage Operator (HAO), a stabilization framework designed to prevent polynomial approximation divergence in FHE-based deep RL. HAO adapts the zero-mean centering projection from advantage-based value estimation directly to temporal-difference (TD) targets. This linear projection annihilates the uniform state-value baseline that drives the Bellman drift, maintaining per-state action rankings while requiring zero additional non-linear multiplicative depth and avoiding expensive ciphertext bootstrapping. The proposed HAO framework was evaluated using a three-tier experimental methodology, including a tabular Markov Decision Process (MDP), an encrypted CartPole environment using real CKKS cryptographic operations, and a 20-node logistics routing benchmark with dense continuous features. The results demonstrate that the proposed HAO strictly bounds network pre-activations within the safe polynomial approximation domain. The proposed HAO RL agents achieved 0\% boundary breaches across all random seeds used, whereas regularization alone (L2 weight decay and gradient clipping) breached the bound on 3 of 5 seeds and the unstabilized baseline did so in 83.8\% of episodes. Finally, HAO agents improve optimal policy accuracy by 18.0 percentage points in tabular domains and remain stable when DP-SGD-style Gaussian noise is added to the clipped gradients.
\end{abstract}

\begin{keywords}
Homomorphic encryption \sep Reinforcement learning \sep Polynomial activation \sep Privacy-preserving machine learning \sep Bellman operator \sep Advantage function
\end{keywords}
\maketitle

\section{Introduction}
\label{sec:intro}

Reinforcement learning (RL) has emerged as a powerful tool for optimizing complex, dynamic systems, from autonomous robotics to large-scale industrial supply chains and logistics ~\cite{sutton2018rl,boute2022reinforcement,SALEH2026111693}. However, as these intelligent systems are increasingly deployed in cloud environments, they face stringent data privacy and confidentiality requirements. In scenarios such as smart logistics routing, transmitting sensitive operational data, like proprietary demand forecasts, financial data, or real-time vehicle GPS coordinates, to centralized servers presents critical security vulnerabilities~\cite{lin2017security,li2023multikey}. Recent work has shown that CKKS-based encryption can protect such location data in outsourced services such as ride-sharing and traffic management~\cite{xie2025secureloc}. Fully Homomorphic Encryption (FHE) offers a compelling solution to this privacy bottleneck by enabling computation directly on encrypted data~\cite{cheon2017ckks}. Under an FHE paradigm, a client can encrypt their operational state and transmit ciphertexts to a cloud-hosted neural network, which then computes and returns encrypted policy decisions without ever exposing the underlying raw data~\cite{gilad2016cryptonets,juvekar2018gazelle}. While we refer to this broad paradigm as FHE, our practical implementation utilizes Leveled Homomorphic Encryption (LHE). By exploiting the fixed depth of the neural network architecture, LHE avoids the prohibitive computational cost of ciphertext bootstrapping, enabling efficient real-time policy evaluation.

Despite its cryptographic guarantees, a fundamental limitation of FHE schemes such as CKKS is that they natively support only addition and multiplication~\cite{cheon2017ckks}. Consequently, standard non-linear activation functions like ReLU or sigmoid must be replaced with polynomial approximations~\cite{pulido2024slaf,araki2025training}. The standard approach utilizes degree-$d$ Chebyshev polynomials, which provide minimax approximation over a strictly bounded convergence domain $[-B, B]$~\cite{gilad2016cryptonets,araki2025training}. For a degree-2 polynomial evaluated on $[-5, 5]$:
\begin{equation}
P_2(z) = 0.469 + 0.5z + 0.094z^2 \approx \max(0, z)
\label{eq:poly}
\end{equation}

If the neural network's pre-activation functions move out-of-bounds $[-B, B]$, the higher-order polynomial terms cause catastrophic numerical explosion~\cite{araki2025training,lee2022lowcomplexity}. For example, at $z=50$, $P_2(50) = 260.47$ compared to $\text{ReLU}(50) = 50$. This failure mode is well-documented in FHE-based supervised learning, where solutions include boundary loss penalties~\cite{araki2025training}, trainable polynomial coefficients~\cite{pulido2024slaf}, and modified training methodologies~\cite{baruch2022hefriendly}.

None of the supervised-learning solutions addresses the unique dynamics of RL, and existing privacy-preserving RL frameworks~\cite{nguyen2025nmi,park2020pprl} do not target the polynomial activation divergence caused by recursive temporal-difference (TD) bootstrapping. In supervised learning, target labels are fixed, static constants. 
In RL, the TD targets are computed sequentially from the network's own outputs via bootstrapping~\cite{sutton2018rl}:

\begin{equation}
y_t = r_t + \gamma \max_a Q(s_{t+1}, a; \theta)
\label{eq:td_target}
\end{equation}

where $r_t$ is the transition reward, $\gamma \in [0, 1)$ is the discount factor, and $Q(s_{t+1}, a'; \theta)$ evaluates next-state actions under current network parameters $\theta$. Because each forward pass through the encrypted network injects a small polynomial approximation error, $\epsilon_{app}$. Because the target depends on the $Q$ values computed with the polynomial, the error accumulates recursively throughout training. The resulting steady-state value function error is bounded by $\|Q^* - \hat{Q}\|_\infty \leq \epsilon_{app}/(1-\gamma)$~\cite{bertsekas1996neuro,munos2008finite}. At a standard discount factor of $\gamma = 0.99$, this results in a $100\times$ amplification of the polynomial error. The error amplification creates a positive feedback loop---which we formally term the Bellman drift---that pushes pre-activation values out of the Chebyshev boundary, rendering standard FHE architectures numerically unstable.

To address this instability, the zero-mean advantage centering principle was applied directly to the temporal-difference bootstrapping targets while employing dueling architectures~\cite{wang2016dueling}. Note that the dueling architectures perform internal stream centering for network identifiability; however, they still output drift-prone $Q$-values.
Our approach tackles this shortcoming by introducing a holistic architectural framework for privacy-preserving RL. Our specific contributions are as follows:

\begin{enumerate}
\item We identify and formally define the Bellman drift as an RL-specific failure mode under FHE polynomial constraints, distinguishing it from the static approximation errors found in supervised learning literature.

\item We propose the Homomorphic Advantage Operator (HAO), a tripartite stabilization framework. It adapts the zero-mean advantage centering principle of Dueling architectures~\cite{wang2016dueling}, but shifts its application directly to the value recursion. Whereas dueling networks perform internal stream centering for network identifiability while still outputting full drift-prone $Q$-values, HAO applies a zero-mean linear projection matrix $P$ directly to the temporal-difference bootstrapping targets. This projection purges the uniform state-value baseline $V(s)$ from the recursive error loop, thus eliminating Bellman drift with no additional multiplicative depth. When paired with client-side reward scaling and decoupled weight decay, HAO strictly bounds pre-activations within degree-2 polynomial limits without requiring ciphertext bootstrapping.

\item We demonstrate through a three-tier experimental evaluation, including a tabular MDP, a CartPole environment using real TenSEAL CKKS cryptographic operations, and a single-depot capacitated routing benchmark, that the HAO framework achieves 0\% boundary breaches, whereas regularization alone breaches the bound on 3 of 5 random seeds and the unstabilized baseline in 83.8\% of episodes.
\item We show that the HAO framework remains stable when Gaussian noise of the kind used in DP-SGD is added to the clipped gradients ($\sigma$ up to 1.0), which makes it compatible with differentially private training.
\end{enumerate}

The remaining part of the paper is organized as follows. In Section~\ref{sec:related}, the related literature is reviewed. Section~\ref{sec:methodology} explains the proposed methodology in detail. Section~\ref{sec:results} evaluates the proposed framework numerically. In section~\ref{sec:discussion}, we discuss the main findings and key insights. Finally, Section~\ref{sec:conclusion} concludes the paper, including possible future extensions.

\section{Related Work}
\label{sec:related}

This section reviews the existing literature surrounding secure machine learning and cryptographic protocols. We categorize prior contributions into three distinct areas: techniques for managing polynomial activations under HE constraints, foundational cryptographic methodologies, and prior efforts in privacy-preserving RL. As summarized in \Cref{tab:related}, while extensive research addresses stability in supervised learning and static inference, the unique challenges of applying FHE to RL remain unexplored.

CryptoNets~\cite{gilad2016cryptonets} introduced the use of square activations ($x^2$) for HE-compatible inference. Subsequent work improved accuracy through self-learning activation functions (SLAF)~\cite{pulido2024slaf,xiong2024slafhe}, HE-friendly training with knowledge distillation~\cite{baruch2022hefriendly}, and boundary loss penalties with selective gradient clipping~\cite{araki2025training}. The ReBoot framework~\cite{pirillo2026reboot} enables fully encrypted training via depth-minimized architectures with CKKS bootstrapping.

Crucially, the previous stabilization methods are explicitly designed for supervised learning paradigms. Boundary loss penalties and SLAF rely on the assumption that target labels are static, ground-truth constants provided by a dataset. In RL, the agent generates its own dynamic targets via temporal-difference (TD) bootstrapping. The static training-time regularizers are insufficient to prevent the subsequent Bellman drift because the targets themselves are impacted by the polynomial approximation error.

The CKKS scheme~\cite{cheon2017ckks} supports approximate arithmetic on encrypted real numbers, serving as the foundation for our continuous state-space operations. Alternate approaches include GAZELLE~\cite{juvekar2018gazelle}, which combines HE linear layers with garbled circuits for exact non-linear operations, and Glyph~\cite{lou2020glyph}, which switches between CKKS and TFHE schemes to optimize different operation types. While mathematically robust, these mixed-protocol techniques incur significant communication and computational overhead, rendering them difficult to scale for highly sequential RL environments.

Prior work on privacy-preserving RL can be categorized into three distinct paradigms as follows:

(I) Several cryptographic data-protection approaches utilizing Multi-Party Computation (MPC) or Differential Privacy (DP) are proposed to protect distributed parameters. Namely, federated RL frequently deploys secure aggregation to mask client weight updates~\cite{bonawitz2017practical}, while DP-SGD bounds gradient privacy leakage~\cite{abadi2016deep}. Homomorphic encryption has likewise been used to protect gradients in distributed and federated deep learning~\cite{antwiboasiako2023cahe,gupta2025asyncfl}. However, these approaches do not provide a mechanism for an untrusted cloud server to compute policy forward passes directly on encrypted continuous state features.

(II) A growing body of work explores direct evaluation on encrypted data. For example, Park et al.~\cite{park2020pprl} proposed a privacy-preserving Q-learning framework using LWE-based FHE in cloud computing, validating feasibility on tabular problems but without neural-network function approximation or polynomial activations. Liu et al.~\cite{liu2019preyer} introduced the Preyer framework for dynamic treatment regimes, designing a plaintext length control protocol to manage bit-length overflow during repeated encrypted operations in small discrete state spaces. Jesu et al.~\cite{jesu2021rlencrypted} studied DQN training over AES-encrypted state inputs; notably, they evaluated homomorphic encryption (CKKS) but concluded that its computational latency was infeasible for iterative RL, falling back to symmetric block ciphers where performance collapsed on complex environments. Most recently, Nguyen et al.~\cite{nguyen2025nmi} introduced a groundbreaking privacy-preserving Soft Actor-Critic (SAC) framework using CKKS with full bootstrapping, achieving performance within 10\% of plaintext baselines on CartPole and Pendulum cases. Their core innovation is an HE-compatible Adam optimizer that reparameterizes momentum to bypass high-degree polynomial approximations of inverse square roots. However, their framework requires high-degree minimax polynomials (degrees 15 and 31) and full bootstrapping (incurring $\approx 505$\,s per training step on CartPole), and does not analyze whether pre-activations remain within the polynomial approximation domain.

(III) RL is increasingly applied to security problems themselves~\cite{javadpour2026beyond}, and recent studies investigate RL for FHE compilation rather than RL under FHE constraints. Sefsaf et al.~\cite{sefsaf2026chehab} utilize deep RL as a compiler optimization tool to automate FHE code vectorization, where the RL agent operates in plaintext. Similarly, Sun et al.~\cite{sun2026privacy} deploy FHE strictly for secure gradient aggregation in usage-based insurance, while policy evaluations remain in plaintext.

\Cref{tab:related} summarizes the surveyed solutions across supervised learning, cryptographic schemes, and privacy-preserving RL. While prior RL works address optimization bottlenecks~\cite{nguyen2025nmi} or tabular feasibility~\cite{park2020pprl}, none of them addresses the recursive polynomial activation explosion (Bellman drift) inherent to bootstrapped deep Q-learning. To the best of our knowledge, our work is the first to formalize this activation-domain failure mode and provide a zero-depth linear operator that enables stable deep RL under lightweight computations of degree-2 leveled HE, while avoiding costly bootstrapping evaluations.

\begin{table*}[t]
\centering
\caption{Taxonomy of existing solutions for polynomial activation and privacy-preserving learning under HE. PHE = Partially HE, LHE = Leveled HE (no bootstrapping), FHE = Fully HE (with bootstrapping).}
\label{tab:related}
\begin{tabular}{clllcp{6.0cm}}
\toprule
\# & Solution & Reference & HE & Paradigm & Mechanism Summary \\
\midrule
\multicolumn{6}{l}{\textit{Training-Time Stabilization (Supervised)}} \\
\midrule
1 & Boundary Loss & Araki et al.~\cite{araki2025training} & LHE & Supervised & Exponential penalty on $|z| > B$; prevents explosion during training \\
2 & Selective Grad.\ Clipping & Araki et al.~\cite{araki2025training} & LHE & Supervised & Clips gradients excluding BatchNorm $\gamma$, $\beta$; preserves normalization \\
3 & SLAF & Pulido-Gaytan \& Tchernykh~\cite{pulido2024slaf} & LHE & Supervised & Trainable polynomial coefficients; learns task-specific activations \\
4 & HE-Friendly Training & Baruch et al.~\cite{baruch2022hefriendly} & LHE & Supervised & Knowledge distillation + polynomial BN folding \\
5 & Weight Clamping / L2 & Standard technique & Any & Supervised & Constrains weight norms; bounds pre-activations indirectly \\
6 & Input Normalization & Standard technique & Any & Supervised & Scales inputs to $[-1, 1]$; keeps first-layer activations in domain \\
\midrule
\multicolumn{6}{l}{\textit{Cryptographic \& Architectural Techniques}} \\
\midrule
7 & CKKS Bootstrapping & Cheon et al.~\cite{cheon2017ckks} & FHE & Inference & Refreshes ciphertexts to reset noise budget; unlimited depth \\
8 & Leveled HE & BGV/BFV/CKKS & LHE & Inference & Pre-allocates depth budget; eliminates bootstrapping latency \\
9 & Scheme Switching & Lou \& Jiang~\cite{lou2020glyph} & LHE+ & Inference & Switches CKKS $\leftrightarrow$ TFHE for arithmetic vs.\ comparison ops \\
10 & CryptoNets ($x^2$) & Gilad-Bachrach et al.~\cite{gilad2016cryptonets} & LHE & Inference & Replaces ReLU with $x^2$; depth-1 activation \\
11 & ReBoot Framework & Pirillo \& Colombo~\cite{pirillo2026reboot} & FHE & Supervised & Depth-minimized encrypted training with CKKS bootstrapping \\
12 & Hybrid HE + MPC & Juvekar et al.~\cite{juvekar2018gazelle} & LHE & Inference & Linear layers in HE; non-linear activations in garbled circuits \\
\midrule
\multicolumn{6}{l}{\textit{Privacy-Preserving Reinforcement Learning}} \\
\midrule
13 & FHE Q-Learning & Park et al.~\cite{park2020pprl} & FHE & Tabular RL & LWE encrypted Q-table updates; no neural network or poly activations \\
14 & Preyer & Liu et al.~\cite{liu2019preyer} & Additive & Discrete RL & Encrypted experience replay with plaintext length overflow control \\
15 & State-Encrypted DQN & Jesu et al.~\cite{jesu2021rlencrypted} & AES & Deep RL & DQN on AES-encrypted states; abandoned HE due to latency \\
16 & HE-SAC & Nguyen et al.~\cite{nguyen2025nmi} & FHE & Deep RL & HE-compatible Adam optimizer; uses degree 15/31 poly with bootstrapping \\
\midrule
\multicolumn{6}{l}{\textit{This Work}} \\
\midrule
\textbf{17} & \textbf{HAO (This Work)} & \textbf{This paper} & \textbf{LHE} & \textbf{Deep RL} & \textbf{P-matrix centering eliminates $V(s)$ Bellman drift; degree-2, no ct--ct multiplication, no bootstrapping} \\
\bottomrule
\end{tabular}
\end{table*}
\section{Proposed Methodology}
\label{sec:methodology}

To overcome the theoretical limitations of standard regularization under FHE, we introduce a comprehensive architectural framework for privacy-preserving RL. First, this section formalizes the interaction between cryptographic mathematical constraints and the recursive nature of value-based learning; second, it introduces the Homomorphic Advantage Operator (HAO) to resolve these constraints, then proves its formal properties, and lastly outlines the client-server deployment architecture. \Cref{fig:general_arch} provides a high-level overview of the proposed framework.

\begin{figure*}[htbp]
\centering
\includegraphics[width=0.5\textwidth]{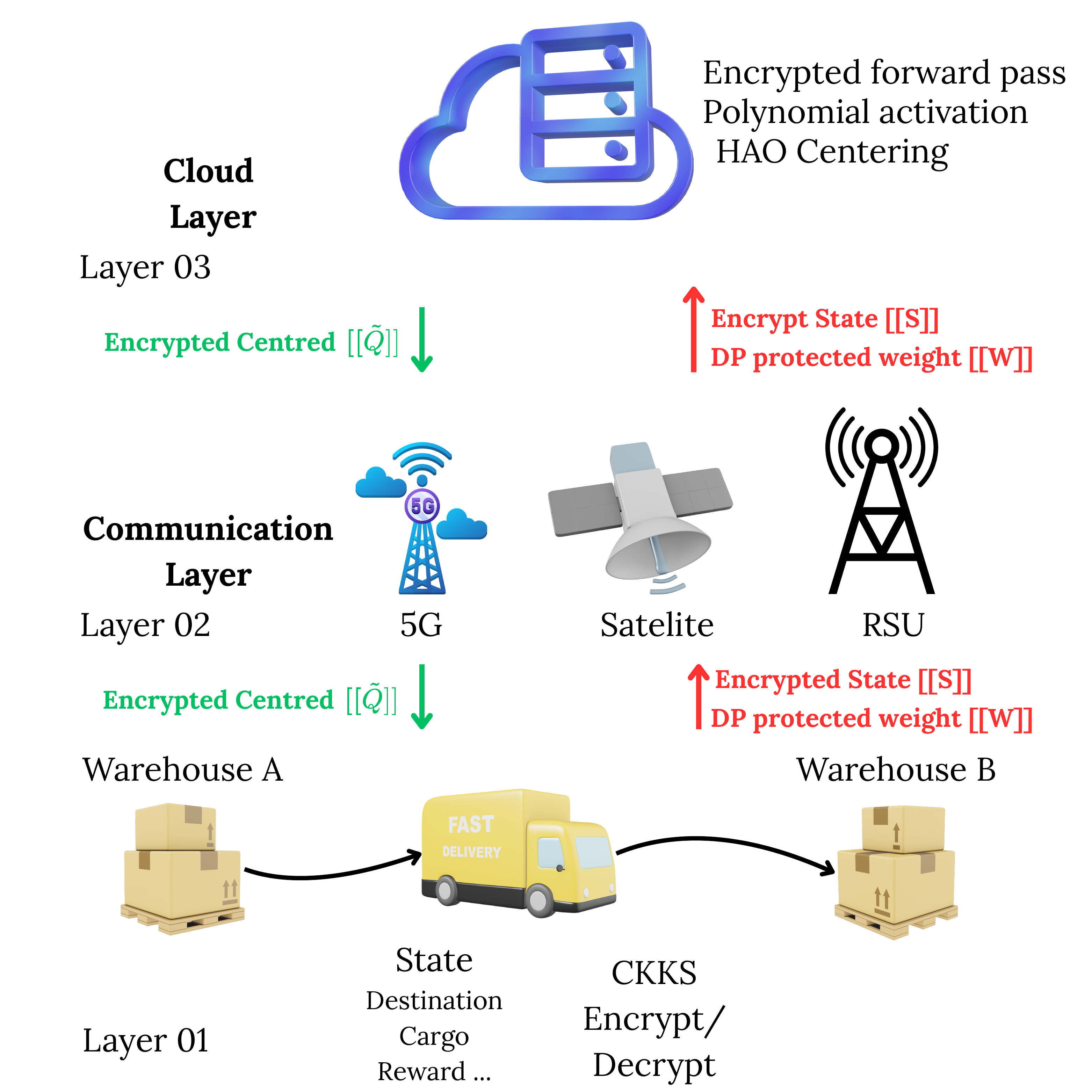}
\caption{High-level overview of the HAO framework for privacy-preserving reinforcement learning. The system integrates three interdependent components: (1)~CKKS-encrypted state transmission and polynomial forward evaluation, (2)~the HAO centering projection that neutralizes Bellman drift, and (3)~clipped weight updates with optional DP-SGD-style Gaussian noise.}
\label{fig:general_arch}
\end{figure*}

We define the privacy-preserving RL agent as a deep Q-network mapping environmental states to a vector of action values. To ensure compatibility with the HE scheme, the network is restricted to a degree-\textit{d} polynomial activation function $P_d$. For the mechanical constraint, we model the polynomial bottleneck at the primary hidden layer, where the forward pass is defined as:
\begin{equation}
Q(s, \cdot; \theta) = P_d(sW_1 + b_1) W_2 + b_2
\end{equation}
where $\theta = \{W_1, b_1, W_2, b_2\}$ denotes the network parameters, $s$ is the environmental state, and the dot ($\cdot$) indicates that $Q(s, \cdot; \theta) \in \mathbb{R}^{|\mathcal{A}|}$ represents the vector of Q-values evaluated across all possible actions in the action space $\mathcal{A}$ with cardinality $|\mathcal{A}|$. Additionally, $P_d$ is a degree-$d$ Chebyshev approximation of the ReLU function. This approximation is mathematically valid only within a strictly bounded domain $[-B, B]$. Consequently, the pre-activation vector $z_1 = sW_1 + b_1$ must satisfy the hard constraint $\|z_1\|_\infty \leq B$, $\| \cdot \|_\infty$ is the supremum norm, to prevent the polynomial divergence.

\begin{theorem}[Bellman Drift Bound]
For a $\gamma$-discounted MDP with a per-step polynomial approximation error bounded by $\epsilon_{app}$, the steady-state value function error satisfies:
\begin{equation}
\|Q^* - \hat{Q}\|_\infty \leq \frac{\epsilon_{app}}{1-\gamma}
\end{equation}
\end{theorem}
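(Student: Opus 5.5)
The plan is to model the polynomial-activation network as an \emph{approximate} Bellman operator and then run the standard contraction argument. Let $\mathcal{T}$ denote the exact Bellman optimality operator,
\[
(\mathcal{T}Q)(s,a) = r(s,a) + \gamma\,\mathbb{E}_{s'}\bigl[\max_{a'} Q(s',a')\bigr],
\]
whose unique fixed point is $Q^*$. Let $\hat{\mathcal{T}}$ be the operator actually realized when targets are formed with the degree-$d$ polynomial network. The hypothesis ``per-step polynomial approximation error bounded by $\epsilon_{app}$'' will be formalized as
\[
\|\hat{\mathcal{T}}Q - \mathcal{T}Q\|_\infty \le \epsilon_{app}
\]
for every $Q$ in the relevant class. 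By ``steady-state'' we mean a fixed point $\hat{Q} = \hat{\mathcal{T}}\hat{Q}$. If no exact fixed point exists, we use the limit set of the iterates $\hat{Q}_{k+1} = \hat{\mathcal{T}}\hat{Q}_k$, and the bound is read as a $\limsup$.

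The key step is the triangle-inequality decomposition
\[
\|Q^* - \hat{Q}\|_\infty = \|\mathcal{T}Q^* - \hat{\mathcal{T}}\hat{Q}\|_\infty \le \|\mathcal{T}Q^* - \mathcal{T}\hat{Q}\|_\infty + \|\mathcal{T}\hat{Q} - \hat{\mathcal{T}}\hat{Q}\|_\infty.
\]
The first term is at most $\gamma\|Q^*-\hat{Q}\|_\infty$ by the $\gamma$-contraction of $\mathcal{T}$ in the sup norm. This contraction follows from $|\max_a f(a) - \max_a g(a)| \le \max_a |f(a)-g(a)|$ together with the fact that taking an expectation is nonexpansive. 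The second term is at most $\epsilon_{app}$ by hypothesis. Rearranging $(1-\gamma)\|Q^*-\hat{Q}\|_\infty \le \epsilon_{app}$ gives the claim, since $\gamma<1$. For the iterative version, set $e_k = \|Q^* - \hat{Q}_k\|_\infty$. The same split yields $e_{k+1} \le \gamma e_k + \epsilon_{app}$. Unrolling gives $e_k \le \gamma^k e_0 + \epsilon_{app}\sum_{j<k}\gamma^j$, and letting $k\to\infty$ gives $\limsup_k e_k \le \epsilon_{app}/(1-\gamma)$. This matches the Bertsekas--Tsitsiklis and Munos-style bounds the paper cites.

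The main obstacle is making the hypothesis precise rather than the algebra. The polynomial error $|P_d(z) - \max(0,z)|$ is only controlled on $\|z_1\|_\infty \le B$, so a uniform $\epsilon_{app}$ for the Q-values requires two things. First, pre-activations must stay in the domain. Second, the error must propagate through the output layer: the Q-value error is roughly $\|W_2\|_{1\to\infty}$ times the activation error on $[-B,B]$. I would state this as a standing assumption, namely that along the iterates every pre-activation lies in $[-B,B]$ and $\epsilon_{app}$ absorbs the $W_2$ factor. This makes explicit that the theorem holds only while the domain constraint is respected. That is exactly what motivates the HAO mechanism later in the paper, since the $1/(1-\gamma)$ amplification can otherwise push $z_1$ out of the domain and invalidate the bound.
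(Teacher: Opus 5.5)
Your argument is correct and is essentially the route the paper takes. The paper gives no derivation of its own; it invokes the standard approximate-value-iteration bound of Bertsekas--Tsitsiklis and Munos--Szepesv\'ari, and your triangle-inequality-plus-contraction step (equivalently the recursion $e_{k+1}\le\gamma e_k+\epsilon_{app}$) is exactly the sup-norm form of that bound. Your explicit hypothesis $\|\hat{\mathcal{T}}Q-\mathcal{T}Q\|_\infty\le\epsilon_{app}$, together with the standing assumption that pre-activations stay in $[-B,B]$ along the iterates, makes precise what the paper leaves implicit.
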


where $Q^*$ represents the optimal action-value function, $\hat{Q}$ denotes the approximated value function under the polynomial evaluation, and $\| \cdot \|_\infty$ is the supremum norm. This theoretical upper bound adapts the finite-time analysis of approximate value iteration established by Munos and Szepesv\'{a}ri~\cite{munos2008finite}, proving that even marginal polynomial deviations are severely amplified by the recursive nature of the Bellman operator.

To understand why traditional neural network regularization fails to prevent this amplification, we must examine the internal structure of the action values. Any Q-value can be structurally decomposed as $Q(s,a) = V(s) + A(s,a)$, where $V(s) = \frac{1}{|\mathcal{A}|}\sum_{a'} Q(s,a')$ is the state-value baseline and $A(s,a) = Q(s,a) - V(s)$ represents the relative advantage of an action~\cite{wang2016dueling}. In practical environments, the magnitude of the state-value baseline typically dwarfs the advantage function, meaning $|V(s)| \gg |A(s,a)|$.

Standard stabilization techniques such as L2 weight decay and gradient clipping successfully control overall weight magnitude but fail to eliminate the persistent directional pressure exerted by $V(s)$. Every TD update systematically pulls the network weights toward this large uniform baseline. This asymmetric force inevitably overpowers symmetric weight decay. The vulnerability is particularly pronounced in environments with dense continuous state spaces, such as logistics routing. Because all input features activate simultaneously, they rapidly accelerate pre-activation growth, virtually guaranteeing a catastrophic breach of the boundary $B$.

\subsection{The Homomorphic Advantage Operator (HAO)}

To overcome the theoretical limitations of standard regularization, we introduce the Homomorphic Advantage Operator (HAO). This operator structurally eliminates the persistent Bellman drift without introducing prohibitive computational overhead to the HE circuit. The method integrates a zero-depth centering projection with symmetric regularization.

The core insight is that subtracting the state-value baseline neutralizes the persistent directional drift that drives pre-activations outside the polynomial domain $[-B, B]$. Crucially, this centering is achieved via a constant linear projection matrix multiplication, which is natively supported by the CKKS scheme and adds zero non-linear multiplicative depth.

Formally, we define the centering matrix:
\begin{equation}
P = I_{|\mathcal{A}|} - \frac{1}{|\mathcal{A}|}\mathbf{1}\mathbf{1}^T
\label{eq:pmatrix}
\end{equation}
where $\mathcal{A}$ denotes the discrete action space, $I_{|\mathcal{A}|}$ is the identity matrix of size $|\mathcal{A}| \times |\mathcal{A}|$, $\mathbf{1}$ is the all-ones row vector of length $|\mathcal{A}|$, and $\mathbf{1}^T$ is its transpose.

For any state $s$, the HAO modifies the TD target by projecting the Q-values through $P$:
\begin{equation}
\tilde{Q}(s, \cdot) = Q(s, \cdot) \cdot P = A(s, \cdot)
\label{eq:hao}
\end{equation}
where $A(s, a) = Q(s, a) - \frac{1}{|\mathcal{A}|}\sum_{a'} Q(s, a')$ is the advantage function. Applying this centering projection to the bootstrapping evaluation at the successor state $s_{t+1}$, the modified TD target becomes:
\begin{equation}
y_t = r_t + \gamma \max_a \tilde{Q}(s_{t+1}, a)
\end{equation}

It is essential to distinguish the HAO projection from the advantage formulation in Dueling DQN~\cite{wang2016dueling}. Dueling networks decompose the internal neural network architecture into separate value and advantage streams, recombining them as $Q(s,a) = V(s) + (A(s,a) - \frac{1}{|\mathcal{A}|}\sum_{a'} A(s,a'))$. While this achieves parameter identifiability, the final output $Q(s,a)$ still retains the large, drifting $V(s)$ baseline, which amplifies the subsequent TD targets error. Moreover, implementing dual-stream branching architectures under FHE incurs significant cryptographic memory and routing overhead. In contrast, HAO operates on a standard single-stream network and applies the centering projection $P$ directly to the TD target computation ($y_t = r_t + \gamma \max_a \tilde{Q}(s_{t+1},a)$). By feeding strictly de-meaned advantage values into the recursive backup, HAO completely severs the positive feedback loop of the Bellman drift while consuming zero non-linear multiplicative depth.

While the centering projection mathematically secures the Bellman contraction, empirical stability in dense environments requires integrating it with scalar bounds and symmetric regularization. The full HAO framework combines three interdependent components: (1)~HAO Centering (Eq.~\ref{eq:hao}), which removes the $V(s)$ directional bias; (2)~Client-side Reward Scaling ($\alpha$), which proportionally compresses the active advantage stream $|A(s,a)|$, keeping the centered targets within polynomial bounds; and 
(3)~Decoupled Weight Decay, which applies differential regularization across layers ($\lambda_1 > \lambda_2$) alongside gradient clipping threshold $G$ and weight decay $\lambda$. This is done to bound the per-step update magnitudes and prevent long-term parameter growth. 

Our ablation study in \Cref{sec:results} demonstrates that all three are necessary: centering without regularization diverges (no gradient control), and regularization without centering violates the bound $B$ therefore it cannot counteract $V(s)$ bias.

\subsection{Formal Properties and the Operator Distortion Identity}
\label{sec:theory}
Because the HAO modifies the Bellman target, we formalize its theoretical properties by analyzing both static policy evaluation and dynamic temporal-difference bootstrapping.
\begin{proposition}[HE-Compatibility]
\label{prop:he}
The HAO centering requires only a single matrix--vector multiplication $\tilde{Q} = Q \cdot P$, which is natively supported by the CKKS scheme. It adds zero additional non-linear multiplicative depth to the HE circuit.
\end{proposition}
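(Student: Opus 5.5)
The plan is to argue directly from the structure of $P$ in Eq.~\eqref{eq:pmatrix} and the operation set of CKKS. First, observe that $P = I_{|\mathcal{A}|} - \frac{1}{|\mathcal{A}|}\mathbf{1}\mathbf{1}^T$ is a fixed, public, data-independent matrix: its entries are $1-\frac{1}{|\mathcal{A}|}$ on the diagonal and $-\frac{1}{|\mathcal{A}|}$ off the diagonal. It does not depend on $\theta$, on the state, or on any encrypted quantity. The centering $\tilde{Q}(s,\cdot) = Q(s,\cdot)\cdot P$ is therefore a plaintext--ciphertext product, not a ciphertext--ciphertext product. Componentwise, I will write
\begin{equation*}
\tilde{Q}(s,a) = Q(s,a) - \frac{1}{|\mathcal{A}|}\sum_{a'} Q(s,a'),
\end{equation*}
which consists of one homomorphic summation across the $|\mathcal{A}|$ slots, one multiplication by the plaintext scalar $1/|\mathcal{A}|$, and one homomorphic subtraction.

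Next, I will map each of these to native CKKS primitives~\cite{cheon2017ckks}. Addition and subtraction of ciphertexts consume no level. The slot-wise sum can be realised either by $\lceil \log_2 |\mathcal{A}| \rceil$ rotate-and-add steps, which use Galois automorphisms plus key switching and consume no multiplicative level, or by a standard diagonal-method plaintext matrix--vector product. Multiplication by a plaintext constant is natively supported.

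The key step is the depth accounting. A ciphertext--ciphertext multiplication increases the multiplicative depth of the circuit. A plaintext--ciphertext multiplication by an encoded constant introduces no new ciphertext-degree growth and no relinearization. Its only cost is at most one rescale, and this rescale can be absorbed by folding $P$ into the preceding linear layer. Concretely, since $Q = P_d(z_1)W_2 + b_2$, we have $QP = P_d(z_1)(W_2P) + b_2P$, so centering reduces to using the modified plaintext weights $W_2P$ and $b_2P$ in the existing output layer. This shows the operation is a single matrix--vector multiplication of the form the network already performs, with zero additional non-linear multiplicative depth. It also never touches the polynomial activation $P_d$.

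The main obstacle is stating precisely what "zero additional depth" means, given that CKKS plaintext scaling nominally consumes a rescale level. I would resolve this with the weight-folding argument above, which makes the added level count literally zero. The unfolded version costs at most the same plaintext-multiplication level that any linear layer already uses. A secondary point is that the $\max_a$ in the TD target is not part of this claim. It is computed client-side after decryption, or it is outside the encrypted circuit, so the proposition concerns only the centering itself.
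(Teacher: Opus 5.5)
Your proposal is correct and follows the paper's own route. The paper's proof likewise observes that $P$ is a constant precomputed plaintext matrix, so $Q\cdot P$ reduces to CKKS-native additions and plaintext-constant multiplications, with no ciphertext--ciphertext product and at most one rescale. Your weight-folding step $QP = P_d(z_1)(W_2P) + b_2P$ is a valid sharpening the paper does not make: it removes even that one extra level, whereas the paper only notes that the level fits within the leveled parameters used.
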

\begin{proof}
The matrix $P$ is constant and precomputed. The operation $Q \cdot P$ decomposes into CKKS-native additions and multiplications by plaintext constants. No comparison, division, or branching operations are required. It requires no ciphertext--ciphertext multiplication; the plaintext--ciphertext product consumes at most one rescaling level, which fits within the leveled CKKS parameters used here, so no bootstrapping is needed.
\end{proof}
\begin{proposition}[Per-State Action Ranking Preservation]
\label{prop:policy}
For any fixed state $s$ and any action-value vector $Q(s, \cdot) \in \mathbb{R}^{|\mathcal{A}|}$, the HAO centering preserves greedy action selection:
\begin{equation}
\arg\max_a \tilde{Q}(s, a) = \arg\max_a Q(s, a).
\end{equation}
\end{proposition}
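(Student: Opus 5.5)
The plan is to compute the centered vector $\tilde{Q}(s,\cdot)$ explicitly from the definition of $P$ in Eq.~\eqref{eq:pmatrix}, and then observe that argmax is unchanged when the same constant is added to every coordinate.

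\textbf{Step 1: compute $\tilde{Q}(s,\cdot)$.} Fix $s$ and write $q = Q(s,\cdot)$ as a row vector in $\mathbb{R}^{|\mathcal{A}|}$. Expanding the product gives
\begin{equation*}
qP = q - \tfrac{1}{|\mathcal{A}|}(q\mathbf{1}^T)\mathbf{1}.
\end{equation*}
The scalar $q\mathbf{1}^T = \sum_{a'} Q(s,a')$ equals $|\mathcal{A}|\,V(s)$. Hence $\tilde{Q}(s,a) = Q(s,a) - V(s)$ for every $a$, where $c := V(s)$ depends on $s$ only and not on $a$. This reproduces the identity $\tilde{Q} = A$ stated in Eq.~\eqref{eq:hao}.

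\textbf{Step 2: argmax is invariant under a common shift.} The map $x \mapsto x - c$ is strictly increasing on $\mathbb{R}$. So for any two actions $a, a'$,
\begin{equation*}
\tilde{Q}(s,a) \ge \tilde{Q}(s,a') \iff Q(s,a) \ge Q(s,a').
\end{equation*}
The same equivalence holds with strict inequalities. Therefore the full preorder of actions induced by $\tilde{Q}(s,\cdot)$ coincides with the one induced by $Q(s,\cdot)$, which is a slightly stronger statement matching the title ``ranking preservation''. In particular, $a^\star$ maximizes one vector if and only if it maximizes the other. Reading $\arg\max$ as a set, this gives equality of the two sets, so ties are handled automatically. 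If a deterministic tie-breaking rule based only on action indices is used, it selects the same action from identical sets.

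\textbf{Expected obstacle.} I do not anticipate a genuine difficulty. The only points needing care are bookkeeping ones. First, the row-vector convention: $\mathbf{1}$ is a row vector, so $\mathbf{1}^T\mathbf{1}$ is the $|\mathcal{A}|\times|\mathcal{A}|$ all-ones matrix, and $qP$ multiplies on the right. Second, being explicit that the shift $V(s)$ is common to all actions within a fixed state.

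I would close with a remark on scope. The result is strictly per-state: $V(s)$ varies across states, so it is not claimed that the modified TD target has the same fixed point as the original Bellman operator. That issue is deferred to the operator distortion analysis that follows.
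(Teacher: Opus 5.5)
Your proposal is correct and follows the paper's proof: both reduce $\tilde{Q}(s,\cdot)$ to $Q(s,\cdot) - V(s)$ and observe that subtracting an action-independent constant leaves $\arg\max_a$ unchanged. The rest is extra care the paper omits: the explicit expansion of $qP$, the set-valued treatment of ties, and your reading of the centering matrix as $I - \frac{1}{|\mathcal{A}|}\mathbf{1}^T\mathbf{1}$ for a row vector $\mathbf{1}$. That reading is what the paper's $\mathbf{1}\mathbf{1}^T$ is meant to denote.
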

\begin{proof}
For any state $s$, the centered action values are given by $\tilde{Q}(s,a) = Q(s,a) - \bar{Q}(s)$, where $\bar{Q}(s) = \frac{1}{|\mathcal{A}|}\sum_{a'} Q(s,a') \equiv V(s)$ denotes the mean action-value (the state-value baseline). Because $\bar{Q}(s)$ is independent of the action $a$, subtracting it is a uniform shift across all action values for state $s$:

\begin{equation}
\begin{aligned}
\arg\max_a \tilde{Q}(s,a) &= \arg\max_a \bigl[Q(s,a) - \bar{Q}(s)\bigr] \\
&= \arg\max_a Q(s,a).
\end{aligned}
\end{equation}

Thus, greedy policy extraction at any evaluated state is invariant under the HAO projection.
\end{proof}
\begin{theorem}[Span Contraction of Ideal Row Centering]
\label{thm:contraction}
Let $\mathcal{T}$ denote the standard Bellman optimality operator, and let $\mathcal{T}_{\text{row}} Q(s, \cdot) = (\mathcal{T} Q)(s, \cdot) \cdot P$ denote the idealized row-centered Bellman operator across all actions at state $s$. Then $\mathcal{T}_{\text{row}}$ is isometric in the span seminorm ($sp$), preserving the $\gamma$-contraction:
\begin{equation}
sp(\mathcal{T}_{\text{row}} Q_1 - \mathcal{T}_{\text{row}} Q_2) \leq \gamma \, sp(Q_1 - Q_2).
\end{equation}
\end{theorem}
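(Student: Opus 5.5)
The plan is to split the claim into two independent facts. First, the centering matrix $P$ of Eq.~(\ref{eq:pmatrix}) is an isometry for the span of each action row. Second, the standard Bellman optimality operator $\mathcal{T}$ is a $\gamma$-contraction in the span seminorm. I will use two seminorms. The row-wise span of a state's action vector is $sp_s(X)=\max_a X(s,a)-\min_a X(s,a)$. The global span is $sp(X)=\max_{s,a}X(s,a)-\min_{s,a}X(s,a)$. Row-wise span never exceeds global span, $sp_s(X)\le sp(X)$ for every $s$.

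\textbf{Step 1 (linearity and isometry of $P$).} Right multiplication by $P$ is linear, so
\[
\mathcal{T}_{\text{row}}Q_1-\mathcal{T}_{\text{row}}Q_2=\bigl((\mathcal{T}Q_1)(s,\cdot)-(\mathcal{T}Q_2)(s,\cdot)\bigr)P .
\]
For any row vector $x$ we have $xP=x-\bar{x}\mathbf{1}$, where $\bar x$ is the mean of the entries of $x$. This is exactly the uniform shift already used in Proposition~\ref{prop:policy}. A uniform shift changes neither $\max_a$ minus $\min_a$, so $sp_s(xP)=sp_s(x)$. This is the claimed isometry: for each state, $P$ preserves the span of the difference row exactly.

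\textbf{Step 2 (span contraction of $\mathcal{T}$).} Write $(\mathcal{T}Q)(s,a)=r(s,a)+\gamma\,\mathbb{E}_{s'\mid s,a}\max_{a'}Q(s',a')$. The rewards cancel in the difference, and the two expectations can be merged because $Q_1$ and $Q_2$ are evaluated under the same transition law:
\[
(\mathcal{T}Q_1-\mathcal{T}Q_2)(s,a)=\gamma\,\mathbb{E}_{s'\mid s,a}\bigl[\max_{a'}Q_1(s',a')-\max_{a'}Q_2(s',a')\bigr].
\]
The standard max-difference sandwich gives, for every $s'$,
\[
\min_{a'}(Q_1-Q_2)(s',a')\le \max_{a'}Q_1(s',a')-\max_{a'}Q_2(s',a')\le \max_{a'}(Q_1-Q_2)(s',a').
\]
So the bracket always lies in the interval $[m,M]$, where $m=\min(Q_1-Q_2)$ and $M=\max(Q_1-Q_2)$ over all pairs $(s,a)$. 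An expectation of values in $[m,M]$ stays in $[m,M]$. Hence every entry of $\mathcal{T}Q_1-\mathcal{T}Q_2$ lies in $[\gamma m,\gamma M]$, which gives $sp(\mathcal{T}Q_1-\mathcal{T}Q_2)\le\gamma\,sp(Q_1-Q_2)$.

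\textbf{Step 3 (combine).} At every state $s$, chain the previous steps:
\[
sp_s(\mathcal{T}_{\text{row}}Q_1-\mathcal{T}_{\text{row}}Q_2)=sp_s(\mathcal{T}Q_1-\mathcal{T}Q_2)\le sp(\mathcal{T}Q_1-\mathcal{T}Q_2)\le\gamma\,sp(Q_1-Q_2).
\]
Taking the maximum over $s$ yields the stated inequality, with the left-hand side read as the worst row-wise span.

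\textbf{Main obstacle.} The delicate point is which seminorm ``$sp$'' denotes on the left-hand side. Row centering subtracts a different mean at each state, so it does \emph{not} preserve the global span. Centered entries can lie anywhere in $[-sp_s,sp_s]$, and measuring the left side in global span could lose a factor up to $2$. The statement is therefore only correct as written with the row-wise span on the left and the global span on the right.

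I would first try to make this choice explicit in the definition of $sp$ before the theorem. If a single seminorm is required on both sides, I would check whether the contraction survives with the weaker constant $2\gamma$. A cleaner alternative is to restrict to the subspace of row-centered $Q$, for which the one-step difference argument can be rerun with centered inputs.
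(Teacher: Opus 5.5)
Your proof is correct and has the same two-part structure as the paper's. The paper also notes that right-multiplication by $P$ subtracts the row mean, so it preserves $\max_i v_i-\min_i v_i$ of each action vector, and then composes this with a contraction property of $\mathcal{T}$. The difference is in that second ingredient. The paper cites only the $\gamma$-contraction of $\mathcal{T}$ in the supremum norm and then asserts the span bound. That does not follow as written, because $\|Q_1-Q_2\|_\infty$ is not controlled by $sp(Q_1-Q_2)$. Closing it from the sup-norm side needs the extra translation property $\mathcal{T}(Q+c)=\mathcal{T}Q+\gamma c$ together with recentering $Q_1-Q_2$ by $(M+m)/2$. Your max-difference sandwich proves the span contraction of $\mathcal{T}$ directly. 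Your explicit seminorm bookkeeping also exposes what the paper's one-line argument hides: $P$ is an isometry only row by row.

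Your conclusion that only the mixed reading is valid (worst row-wise span on the left, global span on the right) is right, and a small example settles the cases you left open. Take actions $\{1,2,3\}$, absorbing states $A,B$, and states $s_1,s_2$ whose three actions lead deterministically to $(A,A,B)$ and $(B,B,A)$. Set $Q_2\equiv 0$, and let $Q_1$ equal $1$ on row $B$ and $0$ elsewhere. Then $sp(Q_1-Q_2)=1$ while every row of $Q_1-Q_2$ is constant. The rows of $\mathcal{T}_{\text{row}}Q_1-\mathcal{T}_{\text{row}}Q_2$ are $\gamma(-\tfrac{1}{3},-\tfrac{1}{3},\tfrac{2}{3})$ at $s_1$, $\gamma(\tfrac{1}{3},\tfrac{1}{3},-\tfrac{2}{3})$ at $s_2$, and zero at $A,B$.

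This example defeats both single-seminorm readings. The global/global reading fails since $\tfrac{4}{3}\gamma>\gamma$, and the row-wise/row-wise reading fails since $\gamma>0$. In general the global span on the left is at most $2(1-1/|\mathcal{A}|)\gamma\,sp(Q_1-Q_2)$, and this example attains that bound. So your $2\gamma$ fallback holds, and for $|\mathcal{A}|=2$ the factor is exactly $1$, so the single-seminorm statement does hold with two actions.

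One caveat applies to both your route and the paper's: they need a genuine stochastic kernel, with terminal states modeled as absorbing. If the bootstrap term is instead zeroed at termination, a constant difference $Q_1-Q_2\equiv c\neq 0$ has zero span but produces row span $\gamma|c|$ at any state where one action terminates and another does not.
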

\begin{proof}
For any state $s$, projecting by $P$ subtracts the uniform scalar mean $\bar{y}(s) = \frac{1}{|\mathcal{A}|}\sum_{a'} (\mathcal{T} Q)(s, a')$ from all action components at $s$. For any vector $v$, $sp(v \cdot P) = \max_i (v_i - \bar{v}) - \min_i (v_i - \bar{v}) = \max_i v_i - \min_i v_i = sp(v)$. Because $\mathcal{T}$ is a $\gamma$-contraction in the supremum norm, its composition with the span isometry $P$ strictly satisfies $sp(\mathcal{T}_{\text{row}} Q_1 - \mathcal{T}_{\text{row}} Q_2) \leq \gamma \, sp(Q_1 - Q_2)$.
\end{proof}
In sample-based model-free deep RL, transition samples $(s, a, r, s')$ provide access only to the observed successor state $s' = \text{succ}(s, a)$, leading to the sampled HAO target:
\begin{equation}
y_t(s,a) = r(s,a) + \gamma \max_{a'} \tilde{Q}(s', a') = (\mathcal{T}_{\text{succ}} Q)(s, a).
\end{equation}
We now establish the exact algebraic relationship between the sample-based realization $\mathcal{T}_{\text{succ}}$ and the ideal operator $\mathcal{T}_{\text{row}}$.
\begin{theorem}[Operator Distortion Identity]
\label{thm:distortion}
For any state $s$ and any two actions $a, a' \in \mathcal{A}$ with successor states $s_a' = \text{succ}(s,a)$ and $s_{a'}' = \text{succ}(s,a')$, the difference between the sample-based targets satisfies the exact identity:
\begin{equation}
\mathcal{T}_{\text{succ}} Q(s, a) - \mathcal{T}_{\text{succ}} Q(s, a') = D(a, a') - E(a, a')
\end{equation}
where $D(a, a') = \mathcal{T}_{\text{row}} Q(s, a) - \mathcal{T}_{\text{row}} Q(s, a')$ is the true relative advantage, and $E(a, a') = \gamma [\bar{Q}(s_a') - \bar{Q}(s_{a'}')]$ is the successor baseline distortion.
\end{theorem}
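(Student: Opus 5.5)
The plan is a direct algebraic computation. I would unfold both operators at the common state $s$ and let the action-independent terms cancel. Throughout I read the notation $s' = \text{succ}(s,a)$ as saying that the successor is the one determined by the transition sample. In that reading, the standard Bellman optimality backup at $(s,a)$ evaluates the same successor as the sampled target: $(\mathcal{T} Q)(s,a) = r(s,a) + \gamma \max_{b} Q(s_a', b)$. I would state this identification explicitly at the start, since the identity is exact only under it. In the stochastic case an expectation over successors would reappear.

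\textbf{Step 1: expand the sampled target.} By Eq.~\eqref{eq:hao}, $\tilde{Q}(s_a', b) = Q(s_a', b) - \bar{Q}(s_a')$. The baseline $\bar{Q}(s_a')$ does not depend on $b$, which is the same observation used in Proposition~\ref{prop:policy}. It can therefore be pulled out of the maximum:
\[
\mathcal{T}_{\text{succ}} Q(s,a) = r(s,a) + \gamma \max_b Q(s_a', b) - \gamma \bar{Q}(s_a') = (\mathcal{T} Q)(s,a) - \gamma \bar{Q}(s_a').
\]
The same expansion holds for $a'$ with successor $s_{a'}'$.

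\textbf{Step 2: expand the ideal row-centered operator.} By the definition in Theorem~\ref{thm:contraction}, $\mathcal{T}_{\text{row}} Q(s,a) = (\mathcal{T} Q)(s,a) - \bar{y}(s)$, where $\bar{y}(s) = \frac{1}{|\mathcal{A}|}\sum_{b} (\mathcal{T} Q)(s,b)$. Because $\bar{y}(s)$ is common to every action at $s$, it cancels in differences:
\[
D(a,a') = (\mathcal{T} Q)(s,a) - (\mathcal{T} Q)(s,a').
\]

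\textbf{Step 3: combine.} Subtract the two instances of Step 1:
\[
\mathcal{T}_{\text{succ}} Q(s,a) - \mathcal{T}_{\text{succ}} Q(s,a') = \bigl[(\mathcal{T} Q)(s,a) - (\mathcal{T} Q)(s,a')\bigr] - \gamma\bigl[\bar{Q}(s_a') - \bar{Q}(s_{a'}')\bigr].
\]
Substituting Step 2 turns the first bracket into $D(a,a')$, and the second term is exactly $E(a,a')$.

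\textbf{Main point of care.} The computation is short, so the only real obstacle is the modeling step. One has to justify that the $(\mathcal{T} Q)(s,a)$ inside $\mathcal{T}_{\text{row}}$ uses the same successor $s_a'$ as the sample. This is immediate for deterministic transitions. More generally it holds if $\mathcal{T}$ is read as the sampled (empirical) Bellman backup. I would state this explicitly, and remark that it is precisely why the distortion term $E$ involves baselines at two different successor states rather than vanishing: centering cancels only the baseline shared at $s$, not the baselines of distinct successors.
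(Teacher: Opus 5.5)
Your proof is correct and follows the paper's argument essentially line for line. You pull the action-independent baseline $\bar{Q}(s_a')$ out of the max to get $\mathcal{T}_{\text{succ}} Q(s,a) = (\mathcal{T} Q)(s,a) - \gamma \bar{Q}(s_a')$, subtract the instance for $a'$, and use that $\bar{y}(s)$ cancels in differences of $\mathcal{T}_{\text{row}}$. Your remark that the identity is exact only when $(\mathcal{T} Q)(s,a)$ is evaluated at the same successor $s_a'$ (deterministic transitions or the sampled backup) states explicitly an assumption the paper uses silently in the first line of its proof.
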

\begin{proof}
By definition of $\mathcal{T}_{\text{succ}}$:
\begin{align*}
\mathcal{T}_{\text{succ}} Q(s, a) &= r(s,a) + \gamma \max_b Q(s_a', b) - \gamma \bar{Q}(s_a') \\
&= (\mathcal{T} Q)(s, a) - \gamma \bar{Q}(s_a').
\end{align*}
Subtracting the target for action $a'$:
\begin{align*}
\mathcal{T}_{\text{succ}} Q(s, a) - \mathcal{T}_{\text{succ}} Q(s, a') &= \\ [(\mathcal{T} Q)(s, a) - (\mathcal{T} Q)(s, a')]  - \gamma [\bar{Q}(s_a') - \bar{Q}(s_{a'}')].
\end{align*}
Because row-centering at state $s$ subtracts the identical scalar $\bar{y}(s)$ from all actions at $s$, $(\mathcal{T} Q)(s, a) - (\mathcal{T} Q)(s, a') \equiv \mathcal{T}_{\text{row}} Q(s, a) - \mathcal{T}_{\text{row}} Q(s, a') = D(a, a')$. Setting $E(a, a') = \gamma [\bar{Q}(s_a') - \bar{Q}(s_{a'}')]$ completes the identity.
\end{proof}

\begin{corollary}[Action-Selection Preservation and Scale Invariance]
\label{cor:ranking}
Let $a^* = \arg\max_{a \in \mathcal{A}} \mathcal{T}_{\text{row}} Q(s, a)$ denote the greedy optimal action under the ideal operator. The sample-based target preserves the optimal greedy selection $\arg\max_{a \in \mathcal{A}} \mathcal{T}_{\text{succ}} Q(s, a) = a^*$ if and only if:
\begin{equation}
\forall a \neq a^*, \quad E(a^*, a) < D(a^*, a).
\label{eq:ranking_criterion}
\end{equation}
Furthermore, this criterion is strictly scale-invariant: for any positive linear reward scaling factor $\alpha > 0$, $\text{sign}(\alpha D - \alpha E) \equiv \text{sign}(D - E)$.
\end{corollary}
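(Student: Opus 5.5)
The plan is to reduce the corollary to the Operator Distortion Identity (Theorem~\ref{thm:distortion}). For each pair $(a^*,a)$ it gives
\[
\mathcal{T}_{\text{succ}} Q(s,a^*) - \mathcal{T}_{\text{succ}} Q(s,a) = D(a^*,a) - E(a^*,a),
\]
so comparisons between sample-based targets become comparisons between $D$ and $E$. Since $\arg\max$ is characterised by pairwise comparisons, the statement that $a^*$ is the (unique) maximiser of $\mathcal{T}_{\text{succ}} Q(s,\cdot)$ is equivalent to
\[
\mathcal{T}_{\text{succ}} Q(s,a^*) - \mathcal{T}_{\text{succ}} Q(s,a) > 0 \quad \text{for all } a \neq a^*,
\]
which by the identity is exactly \eqref{eq:ranking_criterion}.

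First, the "if" direction. Assume \eqref{eq:ranking_criterion}. Each difference above is then strictly positive, so $a^*$ strictly dominates every other action under $\mathcal{T}_{\text{succ}}$, and the argmax is $\{a^*\}$.

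Second, the "only if" direction, by contrapositive. If $E(a^*,a) \ge D(a^*,a)$ for some $a\neq a^*$, then $\mathcal{T}_{\text{succ}} Q(s,a) \ge \mathcal{T}_{\text{succ}} Q(s,a^*)$. Either $a$ beats $a^*$ outright, or the two tie and the greedy choice is no longer uniquely $a^*$. This is the one delicate point. I would fix the convention that "preserves the greedy selection" means $a^*$ is the unique maximiser, and that $a^*$ is itself well defined, i.e.\ $\mathcal{T}_{\text{row}} Q(s,\cdot)$ has a unique maximiser. Under a set-valued argmax with ties allowed, the "only if" direction would fail on the boundary $D=E$. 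So I would state the uniqueness convention explicitly, since it is what makes the strict inequality both necessary and sufficient.

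Third, scale invariance. Under reward scaling $r \mapsto \alpha r$ with $\alpha>0$, the intended reading is that the value quantities scale linearly: $Q \mapsto \alpha Q$, hence $\bar{Q}\mapsto\alpha\bar{Q}$, $\mathcal{T}_{\text{row}}Q \mapsto \alpha\,\mathcal{T}_{\text{row}}Q$, and so $D\mapsto\alpha D$ and $E\mapsto \alpha E$. This linearity holds for the optimal $Q^*$ and more generally for any fixed point, because $\max$ is positively homogeneous and $P$ is linear. The claim itself is elementary: $\alpha D - \alpha E = \alpha(D-E)$ and $\operatorname{sign}(\alpha x)=\operatorname{sign}(x)$ for $\alpha>0$. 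The criterion \eqref{eq:ranking_criterion} is therefore unchanged.

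The main obstacle is not the algebra but making the hypotheses precise. One needs the uniqueness/tie convention for the iff, and one must state that scaling acts on the $Q$ being compared as well as on the rewards. I would state both conventions before the proof.
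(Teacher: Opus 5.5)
Your proposal is correct and follows the route the paper intends. The paper gives no separate proof: it treats the corollary as an immediate consequence of the Operator Distortion Identity, which turns each comparison $\mathcal{T}_{\text{succ}} Q(s,a^*) - \mathcal{T}_{\text{succ}} Q(s,a)$ into $D(a^*,a) - E(a^*,a)$, and reads the scale-invariance clause as the sign of $\alpha(D-E)$ with $\alpha>0$. Your two conventions are clarifications the paper leaves implicit, not departures from its argument: ``preserves the greedy selection'' means $a^*$ is the unique maximiser, and reward scaling must be paired with $Q \mapsto \alpha Q$ for $D$ and $E$ to scale.
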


Corollary~\ref{cor:ranking} delineates the exact operational mechanisms of the HAO framework:
\begin{enumerate}
\item \textbf{Theoretical Foundation ($\mathcal{T}_{\text{row}}$ vs.\ $\mathcal{T}_{\text{succ}}$):} Theorem~\ref{thm:contraction} guarantees non-expansive span contraction for ideal row centering. In sample-based deep RL, $\mathcal{T}_{\text{succ}}$ serves as a computationally lightweight surrogate whose stability transfers conditionally upon satisfying \eqref{eq:ranking_criterion}.
\item \textbf{Reward Scaling ($\alpha$):} As proven by Corollary~\ref{cor:ranking}, $\alpha$ is scale-invariant with respect to ordinal action rankings and cannot alter the sign of $D - E$. Its sole role is to bound pre-activation magnitude ($\|z\|_\infty \leq B$), preventing catastrophic polynomial domain escape.
\item \textbf{Empirical Distortion Margin:} At decision-critical bifurcation states (e.g., balance vs.\ terminal failure in CartPole), $D(a^*, a)$ and $E(a^*, a)$ track each other on the same order of magnitude. Whether $D(a^*, a) > E(a^*, a)$ holds is an empirical property determined by the environment's reward margins and downstream advantage spreads, which can be directly monitored via the distortion ratio $\rho(s) = \max_{a \neq a^*} \frac{E(a^*, a)}{D(a^*, a)} < 1$.
\end{enumerate}

\subsection{Client-Server Architecture and Privacy}

We consider a logistics company (client) outsourcing RL computation to a cloud service (server), see \Cref{fig:architecture} for a full illustration.

\begin{figure*}[t]
\centering
\resizebox{\textwidth}{!}{\input{architecture}}
\caption{HAO client-server architecture for privacy-preserving RL. The client encrypts private states via CKKS, the server computes an encrypted forward pass with polynomial activations, and the client applies HAO centering after decryption. Weight updates can be perturbed with DP-SGD-style Gaussian noise. The HAO centering (gold) is the core contribution: it removes the $V(s)$ Bellman drift without any ciphertext--ciphertext multiplication.}
\label{fig:architecture}
\end{figure*}

We adopt the standard honest but curious threat model used throughout the HE-ML literature~\cite{gilad2016cryptonets,juvekar2018gazelle,pirillo2026reboot}. According to this model, the server faithfully executes the protocol but may attempt to infer private information from the observed data. The primary privacy goal is data confidentiality, where the server must not learn the client's raw states, Q-values, rewards, and actions.

The protocol operates as follows:
\begin{itemize}
\item \textbf{Forward Pass (Encrypted)}: The client encrypts the state $s \rightarrow [[s]]$ using CKKS and sends it to the server. The server computes the encrypted forward pass: $[[Q]] = P_2([[s]] \cdot W_1 + b_1) \cdot W_2 + b_2$, and returns $[[Q]]$ to the client. This is a Ciphertext-Plaintext (CxP) protocol where model weights remain in plaintext on the server.
\item \textbf{Training Update}: The $P$-matrix centering ($\tilde{Q} = Q \cdot P$) can be applied either by the server before decryption as a CxP operation to support thin-client deployments, or by the client after the PxP decryption operation to minimize server operations. The client then decrypts the values if not already plaintext, computes TD targets and gradients in plaintext, applies clipping and weight decay, and sends weight updates $\Delta W$ to the server.
\end{itemize}

The weight updates $\Delta W$ transmitted to the server constitute a potential privacy leakage channel, as gradient inversion attacks can reconstruct training data from observed gradients~\cite{zhu2019dlg}. It is common in privacy-preserving RL to combine FHE with DP. Therefore, the proposed HAO framework is also designed to seamlessly integrate with DP mechanisms.

We address gradient leakage with the Gaussian mechanism of DP-SGD~\cite{abadi2016deep}. In DP-SGD, the client adds calibrated noise to clipped gradients before data transmission. The gradient clipping bound $G$ from the proposed HAO framework serves a dual purpose by stabilizing the Bellman drift and bounding the L2 sensitivity required by the Gaussian mechanism:
\begin{equation}
\Delta\tilde{W} = \text{clip}(\Delta W, G) + \mathcal{N}(0, \sigma^2 G^2 I)
\end{equation}
Since DP noise is zero-mean, it does not create directional bias. HAO's centering addresses the orthogonal problem of systematic Bellman drift. HAO's gradient clipping bound $G$ directly provides the L2 sensitivity parameter, making HAO DP-ready by construction. Our experiments (\Cref{sec:results}) demonstrate that HAO maintains 0\% boundary breaches even under strong noise ($\sigma=1.0$). In these experiments, the noise is added to the clipped mini-batch gradient; a formal $(\epsilon, \delta)$ guarantee additionally requires per-example clipping and a privacy accountant, which we leave to future work.

\section{Results}
\label{sec:results}

To rigorously evaluate the HAO framework, we conduct a progressive series of experiments spanning isolated mechanics to high-dimensional cryptographic deployment. The results uniformly demonstrate that the $P$-matrix projection successfully bounds the Bellman drift while preserving the optimal agent policy. The environments, training parameters, and evaluated modes for these experiments are summarized in \Cref{tab:setup}.

\begin{table*}[t]
\centering
\caption{Summary of Experimental Setup Methodology.}
\label{tab:setup}
\begin{tabular}{l p{4.0cm} p{2.5cm} p{3.5cm} p{4cm}}
\toprule
Experiment & Environment & Architecture & Training Parameters & Evaluated Modes \\
\midrule
1: Tabular MDP & 10 states, 4 actions & 1 Hidden Layer (244 params) & 2000 epochs, 5 random seeds & Standard, Full HAO, Centering Only \\
2: Encrypted CartPole & Hybrid client-server using real TenSEAL CKKS operations & 1 Hidden Layer (114 params) & 250 episodes & Standard vs.\ HAO LHE \\
3: Logistics Ablation & 20-node routing, 8D continuous states, 6 actions, $\alpha=0.05$ & 1 Hidden Layer (486 params) & 1000 eps, 5 seeds, shift at ep 500 & Full HAO, Centering Only, Reg Only, No Stab., HAO+DP \\
\bottomrule
\end{tabular}
\vspace{2ex}
{\par}
{Across all experiments, we calibrate the reward scale $\alpha=0.05$ such that Q-value magnitudes stress-test the polynomial domain boundary. This mirrors standard reward scaling practices in deep RL~\cite{mnih2015humanlevel,vanhasselt2021returnscaling}. Total trainable parameters per architecture (including input/output layers): Exp 1 (244 params), Exp 2 (114 params), Exp 3 (486 params).
\par}
\end{table*}

We isolate the mechanical effect of the $P$-matrix projection using the tabular MDP environment. As shown in \Cref{tab:exp1} and \Cref{fig:exp1}, the standard Q-learning formulation without stabilization immediately exhibits polynomial inflation, driving pre-activations up to $\|z\|_\infty = 1.83$ (compared to $0.74$ under HAO). In unconstrained environments, this super-linear growth rapidly escalates toward the Chebyshev boundary $B=5.0$, where the steep derivative of the polynomial activation function amplifies the approximation error. As illustrated in \Cref{fig:exp1_grad}, this uncentered baseline exhibits exploding gradient norms of $5.04 \pm 0.34$. This instability directly impairs learning, degrading policy decision accuracy to only $68.0\% \pm 8.4\%$.

By contrast, both the Centering Only and Full HAO modes successfully isolate and remove the state-value baseline. This reduces the maximum pre-activations by $2.5\times$ (down to $0.70$ and $0.74$) and compresses gradient norms by nearly two orders of magnitude ($0.06 \pm 0.00$). Crucially, as reported in \Cref{tab:exp1}, this numerical stabilization directly translates into superior decision-making. Both centered modes achieve $86.0\% \pm 5.5\%$ policy accuracy, representing an $18.0\%$ absolute improvement over standard Q-learning.

\begin{figure}[h]
\centering
\includegraphics[width=1.0\columnwidth]{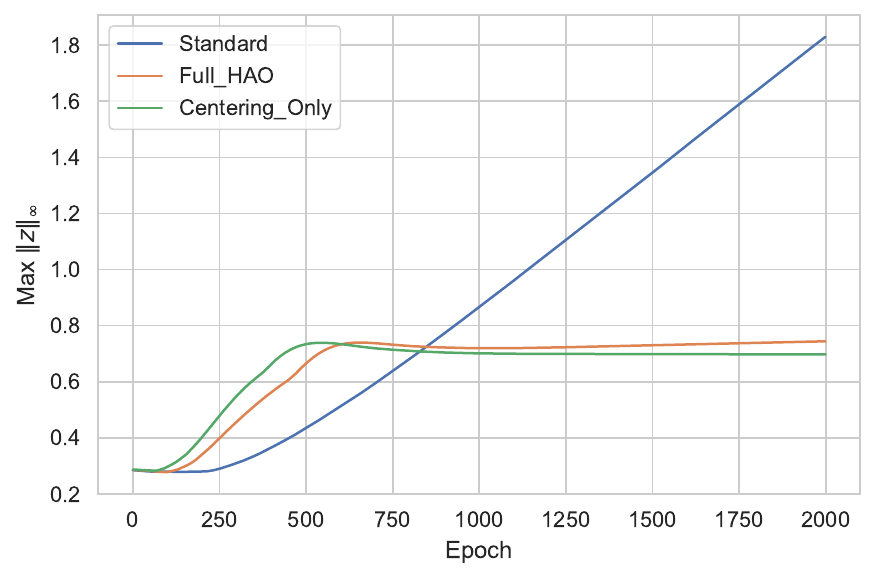}
\caption{Pre-activation growth over 2000 epochs. Centering successfully bounds polynomial inflation and stabilizes training.}
\label{fig:exp1}
\end{figure}

\begin{figure}[h]
\centering
\includegraphics[width=1.0\columnwidth]{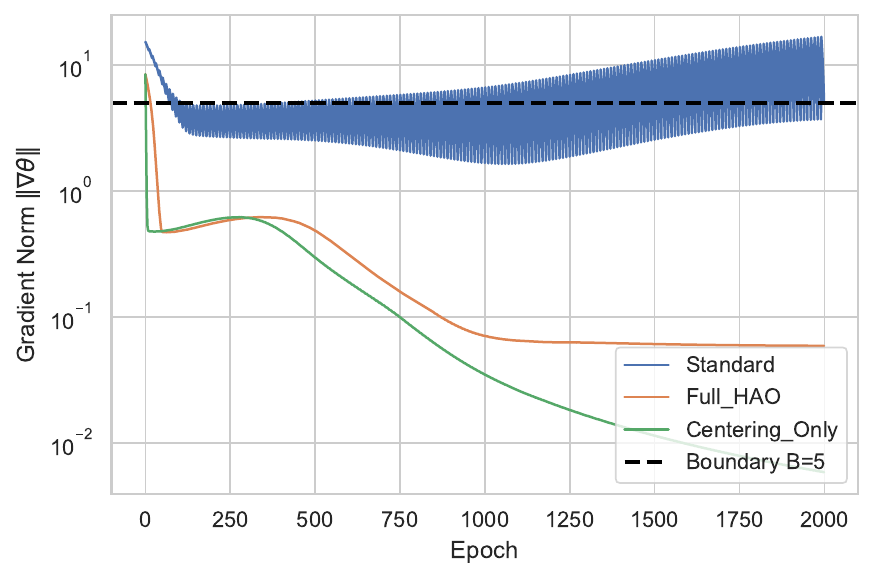}
\caption{Gradient norm dynamics on a logarithmic scale over 2000 epochs. The uncentered baseline exhibits exponential gradient explosion upon breaching the polynomial boundary, whereas HAO centering compresses gradient variance.}
\label{fig:exp1_grad}
\end{figure}

\begin{table}[h]
\centering
\caption{Tabular MDP (2000 epochs, 5 seeds)}
\label{tab:exp1}
\begin{tabular}{lccc}
\toprule
Mode & Accuracy (\%) & Max $\|z\|_\infty$ & Grad Norm \\
\midrule
Standard & $68.0 \pm$ \textbf{8.4} & $1.83 \pm 0.20$ & $5.04 \pm 0.34$ \\
Full HAO & \textbf{86.0} $\pm$ 5.5 & \textbf{0.74} $\pm$ \textbf{0.07} & \textbf{0.06} $\pm$ \textbf{0.00} \\
Centering Only & \textbf{86.0} $\pm$ 5.5 & \textbf{0.70} $\pm$ \textbf{0.08} & \textbf{0.01} $\pm$ \textbf{0.00} \\
\bottomrule
\end{tabular}
\end{table}

To observe the theoretical Bellman drift under actual cryptographic constraints, we deployed a hybrid client-server CartPole agent that utilizes TenSEAL CKKS operations for the forward pass. All experiments were executed using standard Kaggle CPU environments, demonstrating the feasibility of the protocol without requiring specialized hardware accelerators.

\begin{figure}[h]
\centering
\includegraphics[width=1.0\columnwidth]{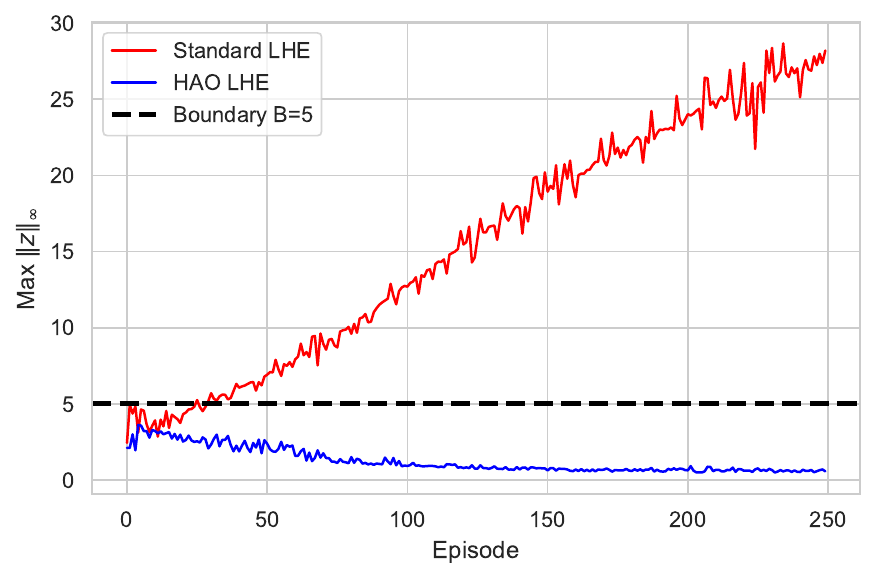}
\caption{Trapping region (pre-activation trajectory) for the hybrid TenSEAL CKKS CartPole agent. The network utilizes a single hidden layer architecture.}
\label{fig:exp2_preact}
\end{figure}
\begin{figure}[h]
\centering
\includegraphics[width=0.7\columnwidth]{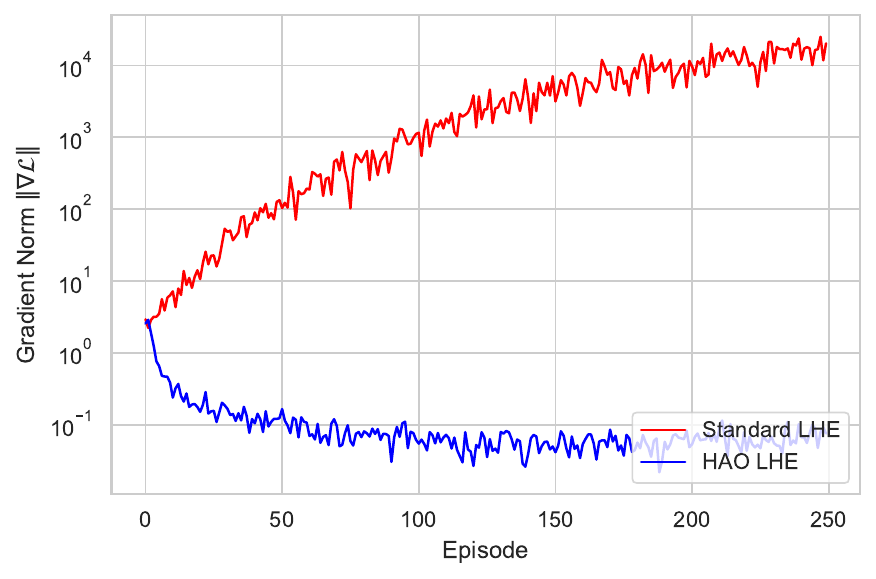}
\caption{Optimization health (gradient norm) over 250 episodes (log scale). Standard LHE diverges exponentially.}
\label{fig:exp2_grad}
\end{figure}

As shown in \Cref{tab:exp2}, \Cref{fig:exp2_preact}, and \Cref{fig:exp2_grad}, under real CKKS arithmetic using a single hidden layer neural network, the standard Leveled HE (LHE) agent completely collapses. The pre-activations grow monotonically from an initial value of 2.44 to an explosive 28.68---a catastrophic breach of $5.7\times B$. This super-linear inflation directly triggers the polynomial approximation error loop formalized in Theorem 1, pushing gradient norms to 25,003 and rendering the agent numerically unstable, capping out at a maximum reward of only 28.0. Conversely, the pre-activations of the HAO-stabilized LHE agent never exceed 3.61, remaining inside $[-B, B]$ throughout training, and settle at $\|z\|_\infty = 0.57$ by the final episode. This stabilizes the gradients and allows the agent to learn a functional policy (Max.~reward 93.0), while the encrypted centering step ($[[Q]] \cdot P$) takes 1.64 seconds per episode on average. Crucially, in this experiment, the centering projection is applied server-side as a CxP operation ($[[Q]] \cdot P$), demonstrating that HAO is equally effective when performed directly on encrypted data.
\begin{table}[h]
\centering
\caption{CKKS Encrypted CartPole (250 episodes). Q$\cdot$P time is the mean time per episode of the encrypted centering step.}
\label{tab:exp2}
\begin{tabular}{lcccc}
\toprule
Mode & Max $\|z\|_\infty$ & Max $\|\nabla \theta\|$ & Max. & Avg. \\
& & & Reward & Q$\cdot$P time (s)\\
\midrule
Standard LHE & $28.68$ & $25,003$ & $28.0$ & N/A \\
HAO LHE & \textbf{3.61} & \textbf{2.88} & \textbf{93.0} & \textbf{1.64} \\
\bottomrule
\end{tabular}
\end{table}

To further evaluate HAO on a realistic, high-dimensional industrial task, we deploy a 20-node, single-depot, single-vehicle capacitated routing environment inspired by the Capacitated Dynamic Vehicle Routing Problem (CDVRP) formulation in RL literature~\cite{boute2022reinforcement,nazari2018reinforcement}.
Unlike the low-dimensional or sparse states of the tabular MDP and CartPole, this benchmark stresses polynomial activation bounds via an 8-dimensional dense continuous state space (encoding normalized vehicle coordinates, current and remaining customer demands, vehicle capacity utilization, visitation progress, time pressure, and distance to the nearest unvisited node) paired with a 6-action candidate dispatch space. Because all continuous input features activate simultaneously, they accelerate pre-activation accumulation. To rigorously test the robustness of the stabilization mechanisms, we inject a severe non-stationary environment shift at Episode 500. This shift scrambles the delivery demands and perturbs all node locations, delivering a sudden shock to the TD-targets. This forces the agent to rapidly adapt to a novel state distribution, thereby stress-testing the network's susceptibility to catastrophic gradient explosion under sudden temporal difference errors. As in the tabular experiment, this ablation trains the FHE-compatible network (degree-2 activation) in plaintext; the encrypted forward pass is evaluated in the CartPole experiment.

\begin{figure}[h]
\centering
\includegraphics[width=1.0\columnwidth]{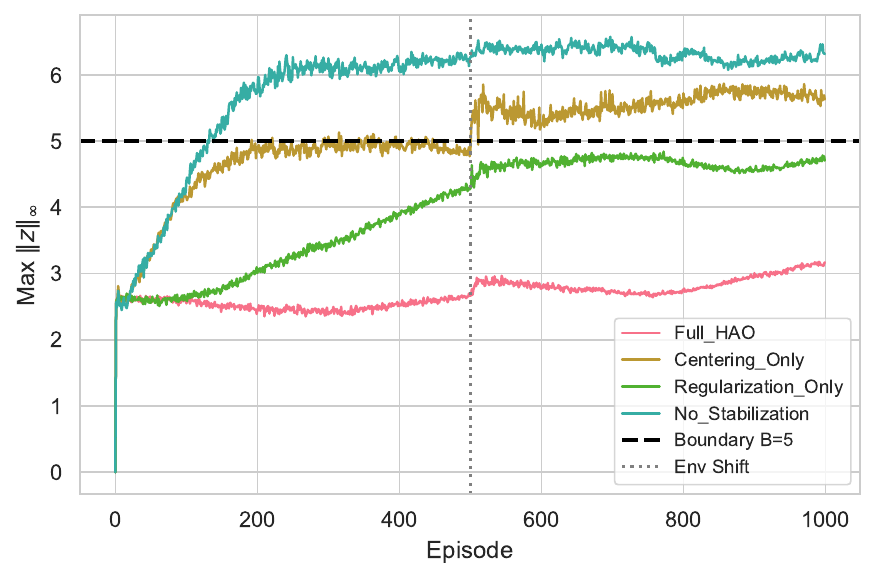}
\caption{Pre-activation growth over 1,000 episodes. The HAO framework strictly bounds pre-activations below $B=5.0$, whereas uncentered baseline agents diverge under the environment shift.}
\label{fig:exp3_preact}
\end{figure}

\begin{figure}[h]
\centering
\includegraphics[width=1.0\columnwidth]{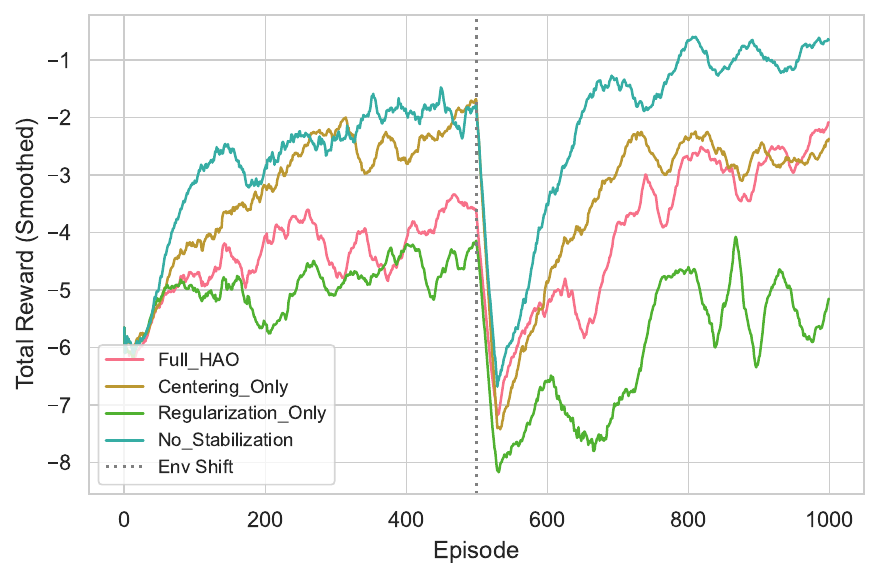}
\caption{Agent performance (moving average reward) over 1,000 episodes in the 20-node Logistics environment. Full HAO maintains optimal performance post-shift.}
\label{fig:exp3_reward}
\end{figure}

\begin{table}[h]
\centering
\caption{Realistic Logistics ($\gamma=0.99$, $\alpha=0.05$, 5 seeds)}
\label{tab:exp3}
\begin{tabular}{lccc}
\toprule
Mode & Max $\|z\|_\infty$ & Breach \% & FHE-Deploy? \\
\midrule
Full HAO & \textbf{3.89} & \textbf{0.0\% }& \textbf{Yes} \\
Centering Only & 7.79 & 54.5\% & No \\
Regularization Only & 6.98 & 23.8\% & No \\
No Stabilization & 7.82 & 83.8\% & No \\
\midrule
HAO + DP Moderate & \textbf{4.12} & \textbf{0.0\%} & \textbf{Yes} \\
HAO + DP Strong & \textbf{4.01} & \textbf{0.0\%} & \textbf{Yes} \\
\bottomrule
\end{tabular}
\end{table}

The ablation results (\Cref{tab:exp3}, \Cref{fig:exp3_preact}, and \Cref{fig:exp3_reward}) clearly demonstrate that standard neural network stabilization techniques are fundamentally insufficient. The 'Regularization Only' mode, which relies exclusively on L2 weight decay and gradient clipping, breaches the boundary on 3 of 5 random seeds (23.8\% of all episodes). This occurs because the asymmetric force of the drifting state-value baseline $V(s)$ systematically overpowers the symmetric pull of standard weight decay.

The Full HAO framework, however, achieves a 0\% boundary breach rate across all 5,000 evaluation episodes. By utilizing the $P$-matrix projection to annihilate the directional $V(s)$ bias, the clipping and decay mechanisms are finally able to function symmetrically. This confirms that the centering operation is not merely a helpful heuristic, but a strict mathematical necessity for high-dimensional FHE deployment.

Gradient inversion attacks present a significant privacy vulnerability in client-server machine learning. Because the HAO framework inherently requires a gradient clipping bound $G$ to stabilize the Bellman drift, it naturally provides the L2 sensitivity parameter required by the Gaussian mechanism of DP-SGD. 

\begin{figure}[h]
\centering
\includegraphics[width=1.0\columnwidth]{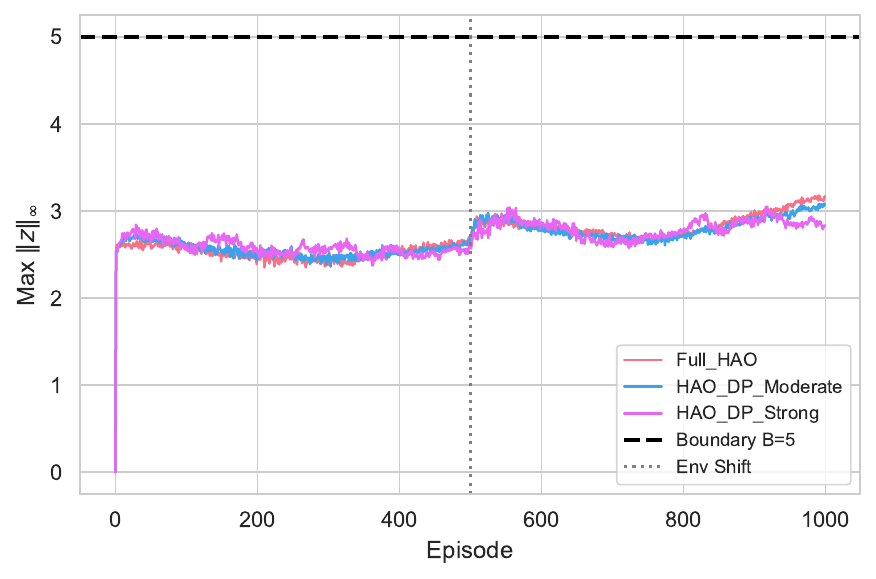}
\caption{Pre-activation robustness under DP noise. HAO successfully bounds the trajectory despite strong DP noise.}
\label{fig:exp3_dp_preact}
\end{figure}

\begin{figure}[h]
\centering
\includegraphics[width=1.0\columnwidth]{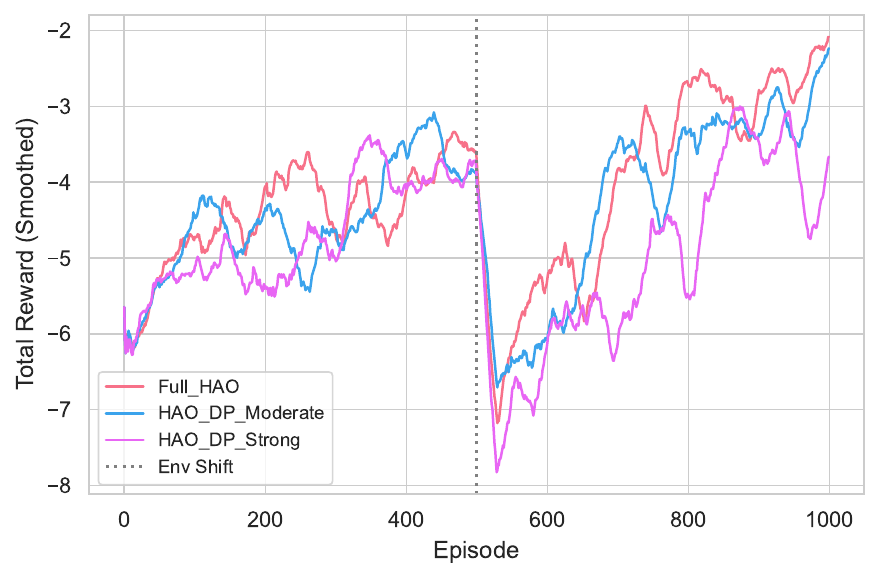}
\caption{Agent performance with DP noise injected. The privacy guarantee trades off only a marginal reduction in reward output.}
\label{fig:exp3_dp_reward}
\end{figure}

As shown in \Cref{tab:exp3} and \Cref{fig:exp3_dp_preact,fig:exp3_dp_reward}, integrating HAO with both moderate ($\sigma=0.3$) and strong ($\sigma=1.0$) DP-style noise maintains the perfect 0\% boundary breach rate. Because the injected DP noise is strictly zero-mean, it introduces no directional bias and does not disrupt the symmetric stabilization achieved by the centering operation. The marginally lower maximum pre-activation observed under strong noise ($4.01$ vs.\ $4.12$) reflects the regularizing perturbation of isotropic Gaussian noise, which disrupts systematic parameter drift along any single coordinate. The cost of this added noise is modest, incurring only a marginal reduction in raw reward output (from $-2.2$ down to $-4.1$), indicating that HAO tolerates DP-style gradient noise.

\section{Discussion}
\label{sec:discussion}
The empirical results across all three experimental domains confirm that standard deep RL stabilization techniques, namely, gradient clipping and L2 regularization, are fundamentally insufficient for FHE deployment. The HAO resolves this critical bottleneck not through heuristic hyperparameter tuning, but through a structural mathematical correction to the Bellman drift error.

Our empirical evaluations confirm that the unanchored drift of the state-value baseline $V(s)$ acts as a massive, asymmetric force that systematically overpowers symmetric stabilization techniques like L2 weight decay as observed in \Cref{fig:exp3_preact}. By deploying the $P$-matrix projection, the HAO framework structurally neutralizes this drift at its source. Isolating the advantage function $A(s,a)$ effectively collapses the magnitude of the signal flowing through the polynomial activation, guaranteeing that pre-activations remain bounded within $[-B, B]$ without requiring heuristic gradient penalties (\Cref{tab:exp3}).

Crucially, the Centering Only ablation still suffered boundary breaches in the high-dimensional logistics environment. While centering successfully stops the asymmetrical drift of $V(s)$, an unregularized neural network will still naturally and symmetrically expand its weights over time as it learns (\Cref{tab:exp3} and \Cref{fig:exp3_preact}). This confirms that the $P$-matrix is not a replacement for standard machine learning regularization. Rather, it is the mathematical prerequisite that finally allows standard regularization to function correctly in an FHE context.

Our findings also provide critical insight into the computational cost structure of recent encrypted RL architectures. Notably, Nguyen et al.~\cite{nguyen2025nmi} achieved encrypted deep RL by pairing an HE-compatible Adam optimizer with degree-15 and degree-31 minimax polynomial activations. However, supporting such high polynomial degrees necessitates full FHE bootstrapping, resulting in a computational latency of $\approx 505$\,s per single training step on CartPole~\cite{nguyen2025nmi}. We observe that these high-degree polynomials are fundamentally required to create an artificially wide convergence window that tolerates unconstrained pre-activation growth. By contrast, HAO directly attacks the root cause of this growth by annihilating the $V(s)$ drift at its mathematical source, and bounding pre-activations within $[-5, 5]$ using a lightweight degree-2 polynomial. Thus, it eliminates the need for bootstrapping, and its encrypted centering step takes $1.64$\,s per episode in our CartPole evaluations. This figure times the $[[Q]] \cdot P$ step alone, so it is not a like-for-like comparison with a full bootstrapped training step. The two approaches are complementary: HAO provides the activation-domain stabilization necessary for low-degree leveled HE, while optimizer reparameterization~\cite{nguyen2025nmi} resolves gradient-level bottlenecks.

In standard FHE-RL, calibrating the reward scale ($\alpha$) is traditionally the sole mechanism available to researchers to prevent polynomial explosion~\cite{sun2026privacy}. By testing HAO across a massive sweep of $\alpha \in [0.01, 0.25]$, we demonstrated that the framework maintains strict operational stability ($\|z\|_\infty < B$) across a $4\times$ wider operational window ($\alpha \in [0.01, 0.08]$) compared to regularization alone (\Cref{tab:reward_scale}). By drastically relaxing this rigid hyperparameter tuning requirement, HAO makes FHE-RL practical to deploy across diverse environments without exhaustive trial-and-error.

\begin{table}[h]
\centering
\caption{Maximum Pre-Activation ($\|z\|_\infty$) Across Reward Scale ($\alpha$) Sweeps. Values $>5.0$ trigger divergence.}
\label{tab:reward_scale}
\resizebox{\columnwidth}{!}{%
\begin{tabular}{lcccccc}
\toprule
$\alpha \rightarrow$ & $\mathbf{0.01}$ & $\mathbf{0.02}$ & $\mathbf{0.05}$ & $\mathbf{0.08}$ & $\mathbf{0.10}$ & $\mathbf{0.25}$ \\
\midrule
Reg. Only & 2.14 & 4.91 & Breach & Breach & Breach & Breach \\
Full HAO  & 1.25 & 1.87 & 2.95   & 4.88   & Breach & Breach \\
\bottomrule
\end{tabular}%
}
\end{table}

It is worth noting that adaptive value normalization techniques, such as PopArt~\cite{vanhasselt2016popart}, are widely used in plaintext deep RL to normalize targets across widely varying reward scales. However, PopArt relies on tracking online statistics and dividing targets by the running standard deviation, a non-polynomial operation that is incompatible with the arithmetic constraints of leveled homomorphic encryption. HAO achieves an analogous stabilizing effect through a purely linear, additive projection matrix $P$, serving as an HE-native counterpart to return normalization that incurs zero multiplicative depth.

The HAO framework is architecturally agnostic and inherently synergistic with DP. Because the centering projection is mathematically neutral and introduces no directional bias, the framework cleanly absorbs zero-mean Gaussian noise without destabilizing the polynomial bounds (\Cref{tab:exp3} and \Cref{fig:exp3_dp_preact}). In our implementation, this supports two distinct privacy tiers: (1)~CxP forward evaluation for maximum throughput, where the client keeps their state private; and (2)~CxP combined with Gaussian noise on the weight updates, which, with per-example clipping and privacy accounting, can provide formal $(\epsilon, \delta)$-privacy guarantees against gradient inversion attacks. Furthermore, the HAO centering matrix $P$ is a constant linear projection; therefore, it can be applied either on the client side or directly on the server without any additional ciphertext--ciphertext multiplication, meaning the mathematical framework theoretically extends to full Ciphertext-Ciphertext (CxC) evaluation for strict model confidentiality without increasing the polynomial degree overhead.

While HAO successfully anchors the Bellman drift, our current implementation is constrained to a single hidden-layer network and a degree-2 polynomial activation. These constraints are strictly dictated by the restrictive multiplicative depth limitations inherent to the TenSEAL CKKS encryption scheme. Scaling this framework to deeper architectures or higher-degree polynomials will require the integration of FHE bootstrapping, a computationally expensive operation that remains a central challenge for cryptographic machine learning. Furthermore, while HAO expands the viable operational window for reward scaling, $\alpha$ remains an environment-specific hyperparameter that requires baseline calibration.

\section{Conclusion and Future Work}
\label{sec:conclusion}

We identified the Bellman drift as a fundamental obstacle to deploying RL under FHE polynomial constraints and proposed the Homomorphic Advantage Operator (HAO) to solve it. Through centering TD targets with a zero-mean projection, HAO eliminates the persistent state-value bias that drives pre-activation growth. In empirical evaluations, HAO achieved 0\% boundary breaches while regularization alone breached the bound on 3 of 5 seeds, and simultaneously expanded the viable operational window for reward scaling by $4\times$. Because it requires no additional non-linear depth and is mathematically neutral, HAO is inherently HE-compatible and tolerates DP-style gradient noise. Thus, this makes it a highly practical and necessary component for privacy-preserving RL.

Future work must address the two primary frontiers of FHE-RL deployment: architecture depth and model confidentiality. Overcoming the strict multiplicative depth limits of shallow networks will require the integration of FHE bootstrapping, an open challenge that would enable scaling to deep, multi-layer topologies. For deployments requiring strict model confidentiality, extending the framework to Ciphertext-Ciphertext (CxC) evaluation is a natural next step. Because the HAO centering matrix $P$ is a constant linear projection that consumes no multiplicative depth, it can be seamlessly integrated directly onto the server's ciphertext pipeline, making it perfectly suited for future CxC paradigms.

\bibliographystyle{cas-model2-names}
\bibliography{bibliography}

\end{document}

%% file: architecture.tex
\begin{tikzpicture}[
    >=Stealth,
    font=\small,
    node distance=1.8cm,
    client_box/.style={rectangle, draw=clientblue, thick, fill=clientfill, 
                       minimum width=4.2cm, minimum height=1.1cm, align=center, rounded corners=3pt},
    server_box/.style={rectangle, draw=servergray, thick, fill=serverfill, 
                       minimum width=4.5cm, minimum height=1.1cm, align=center, rounded corners=3pt},
    hao_box/.style={rectangle, draw=haogold, very thick, fill=haofill, 
                    minimum width=4.2cm, minimum height=1.1cm, align=center, rounded corners=3pt},
    weight_box/.style={rectangle, draw=servergray, thick, fill=yellow!8, 
                       minimum width=4.5cm, minimum height=1.1cm, align=center, rounded corners=3pt},
    data_flow/.style={->, thick, draw=black!70},
    enc_flow/.style={->, very thick, draw=encgreen},
    dp_flow/.style={->, thick, draw=dpviolet, densely dashed},
    note/.style={font=\scriptsize\itshape, text=black!60, align=center},
]

\node[client_box, fill=clientblue!15] (env) at (0, 0) 
    {\textbf{Logistics Environment}\\{\scriptsize State $s \in \mathbb{R}^8$ (private)}};

\node[client_box, below=0.8cm of env] (scale) 
    {\textbf{Reward Scaling}\\{\scriptsize $r \leftarrow r \times \alpha$}};

\node[client_box, below=0.8cm of scale] (encrypt) 
    {\textbf{CKKS Encrypt}\\$s \rightarrow [\![s]\!]$};

\node[client_box, below=3.8cm of encrypt] (decrypt) 
    {\textbf{CKKS Decrypt}\\$[\![\tilde{Q}]\!] \rightarrow \tilde{Q}(s, \cdot)$};

\node[client_box, below=0.8cm of decrypt] (td) 
    {\textbf{Action + TD Target}\\$a = \arg\max \tilde{Q}$, \; $y_t = r + \gamma \max \tilde{Q}'$};

\node[client_box, fill=dpviolet!8, draw=dpviolet, below=0.8cm of td] (dpsgd)
    {\textbf{Backprop + DP-SGD + WD}\\{\scriptsize Clip($G$), $\mathcal{N}(0, \sigma^2 G^2)$, decoupled $\lambda$}};

\draw[data_flow] (env) -- (scale);
\draw[data_flow] (scale) -- (encrypt);
\draw[data_flow] (decrypt) -- (td);
\draw[data_flow] (td) -- (dpsgd);

\begin{scope}[on background layer]
    \node[draw=clientblue, thick, rounded corners=6pt, fill=clientblue!3, 
          inner sep=12pt, fit=(env)(scale)(encrypt)(decrypt)(td)(dpsgd)] (cframe) {};
    \node[above right, font=\bfseries\small, text=clientblue] at (cframe.north west) 
        {CLIENT (Private)};
\end{scope}

\node[weight_box] (weights) at (8.5, -0.0) 
    {\textbf{Stored Weights}\\$W_1, b_1, W_2, b_2$};

\node[server_box, below=0.8cm of weights] (fwd)
    {\textbf{Encrypted Forward Pass}\\$[\![z]\!] = [\![s]\!] \cdot W_1 + b_1$};

\node[server_box, below=0.8cm of fwd] (cheb)
    {\textbf{Polynomial Activation}\\$[\![a]\!] = P_2([\![z]\!])$\quad{\scriptsize$\|z\|_\infty \leq B$}};

\node[server_box, below=0.8cm of cheb] (qout)
    {\textbf{Output Layer}\\$[\![Q]\!] = [\![a]\!] \cdot W_2 + b_2$};

\node[hao_box, below=0.8cm of qout] (hao) 
    {\textbf{HAO Centering}\\$[\![\tilde{Q}]\!] = [\![Q]\!] \cdot P$\quad{\scriptsize(ciph $\times$ plain)}};

\node[weight_box, below=0.8cm of hao] (update)
    {\textbf{Weight Update (SGD)}\\$W \leftarrow W - \eta \cdot \Delta W - \lambda W$};

\draw[data_flow] (weights) -- (fwd) node[midway, right, note] {reads $W_1, b_1$};
\draw[data_flow] (fwd) -- (cheb);
\draw[data_flow] (cheb) -- (qout) node[midway, right, note] {reads $W_2, b_2$};
\draw[data_flow] (qout) -- (hao);

\draw[data_flow, thick, draw=orange!70!black] (update.east) -- ++(0.8, 0) |- (weights.east) 
    node[pos=0.5, right, note, text=orange!70!black] {updates\\$W_1, W_2$\\$b_1, b_2$};

\begin{scope}[on background layer]
    \node[draw=servergray, thick, rounded corners=6pt, fill=servergray!3, 
          inner sep=12pt, fit=(weights)(fwd)(cheb)(qout)(hao)(update)] (sframe) {};
    \node[above right, font=\bfseries\small, text=servergray] at (sframe.north west) 
        {SERVER (Untrusted)};
\end{scope}

\draw[enc_flow] (encrypt.east) -- ++(3.0, 0) |- (fwd.west) 
    node[pos=0.25, above left, font=\scriptsize\bfseries, text=encgreen] {$[\![s]\!]$}
    node[pos=0.25, below left, note, text=encgreen] {encrypted state};

\draw[enc_flow] (hao.west) -- ++(-1.2, 0) |- (decrypt.east)
    node[pos=-0.1, above left, font=\scriptsize\bfseries, text=encgreen] {$[\![\tilde{Q}]\!]$}
    node[pos=-0.2, below left, note, text=encgreen] {centered Q-values};

\draw[dp_flow] (dpsgd.east) -- ++(1.5, 0) |- (update.west)
    node[pos=0.0, above right, font=\scriptsize\bfseries, text=dpviolet] {$\Delta W$}
    node[pos=0.0, below right, note, text=dpviolet] {DP-protected};

\end{tikzpicture}